\newtheorem{theorem}{Theorem}
\newtheorem{lemma}[theorem]{Lemma}
\newtheorem{corollary}[theorem]{Corollary}
\def\R{\mathbb{R}}
\def\eps{\varepsilon}
\newcommand{\Rg}[2]{R_{{#1},{#2}}}
\newcommand{\rg}[2]{r_{{#1},{#2}}}
\newcommand{\rhog}[2]{\rho_{{#1},{#2}}}
\newcommand{\ls}[2]{\ell_{{#1},{#2}}}
\newcommand{\pr}[2]{p_{{#1},{#2}}}
\newcommand{\p}[1]{p_{{#1}}}
\newcommand{\eplus}{E_{t}}
\newcommand{\eplustwo}{E_{t+1}}
\newcommand{\epp}{E_{t,t+1}}
\newcommand{\sone}{S_1}
\newcommand{\stwo}{S_2}
\newcommand{\changeto}[2]{#2}
\title{A Parameter-free Hedging Algorithm}
\author{Kamalika Chaudhuri\\
ITA, UC San Diego\\
{\tt kamalika@soe.ucsd.edu}
\And Yoav Freund\\
CSE, UC San Diego\\
{\tt yfreund@ucsd.edu}
\And Daniel Hsu\\
CSE, UC San Diego\\
{\tt djhsu@cs.ucsd.edu}}
\begin{document}

\maketitle

\begin{abstract}
We study the problem of decision-theoretic online learning (DTOL).
Motivated by practical applications, we focus on DTOL when the number of
actions is very large. Previous algorithms for learning in this framework
have a tunable learning rate parameter, and a barrier to using
online-learning in practical applications is that it is not understood how
to set this parameter optimally, particularly when the number of actions is
large.

In this paper, we offer a clean solution by proposing a novel and
completely parameter-free algorithm for DTOL.  We introduce a
new notion of regret, which is more natural for applications with a large
number of actions. We show that our algorithm achieves good performance
with respect to this new notion of regret; in addition, it also achieves
performance close to that of the best bounds achieved by previous
algorithms with optimally-tuned parameters, according to previous notions
of regret.

\end{abstract}

\section{Introduction}

In this paper, we consider the problem of decision-theoretic online
learning (DTOL), proposed by Freund and Schapire~\cite{FS97}. DTOL is a
variant of the problem of prediction with expert advice~\cite{LW94,
Vov98}. In this problem, a learner must assign probabilities to a fixed set of actions in a sequence of rounds.
After each assignment, each action incurs a loss (a value in $[0,1]$); the
learner incurs a loss equal to the expected loss of actions for that round,
where the expectation is computed according to the learner's current
probability assignment.
The \emph{regret} (of the learner) to an action is the difference between
the learner's cumulative loss and the cumulative loss of that action.
The goal of the learner is to achieve, on any sequence of losses, low
regret to the action with the lowest cumulative loss (the best action).

DTOL is a general framework that captures many learning problems of
interest. For example, consider tracking the hidden state of an object in a
continuous state space from
noisy observations~\cite{CFH09}. To look at tracking in a DTOL framework,
we set each action to be a path (sequence of states) over the state space. The loss of an action at time $t$ is the distance between the observation at time $t$ and
the state of the action at time $t$, and the goal of the learner is to
predict a path which has loss close to that of the action with the lowest cumulative loss.

The most popular solution to the DTOL problem is the Hedge
algorithm~\cite{FS97, FS99}. In Hedge, each action is assigned
a probability, which depends on the cumulative loss of this action and a
parameter $\eta$, also called the {\em learning rate}. By appropriately
setting the learning rate as a function of the iteration~\cite{ACBG02,
CBMS07} and the number of actions, Hedge can achieve a regret upper-bounded
by $O(\sqrt{T \ln N})$, for each iteration $T$, where $N$ is the
number of actions.  
This bound on the regret is optimal as there is a $\Omega(\sqrt{T \ln N})$ lower-bound~\cite{FS99}.

In this paper, motivated by practical applications such as tracking, we
consider DTOL in the regime where \changeto{$N$, the number of
actions,}{the number of actions $N$} is very large.
A major barrier to using
online-learning for practical problems is that when $N$ is large, it is not understood how to set the learning rate $\eta$. \cite{CBMS07, ACBG02} suggest setting $\eta$ as a fixed function of the number of actions $N$.
However, this can lead to poor performance, as we illustrate by an example in Section~\ref{sec:expts}, and the degradation in performance is particularly exacerbated as $N$ grows larger. One way to address this is by
simultaneously running multiple copies of Hedge with multiple values of the
learning rate, and choosing the output of the copy that performs the best in an online way. However, this solution is impractical for
real applications, particularly as $N$ is already very large. (For more details about these solutions, please see Section~\ref{sec:related}.)

In this paper, we take a step towards making online learning more practical  by proposing a novel, completely adaptive algorithm for DTOL. Our algorithm is called NormalHedge. NormalHedge is very simple  and easy to implement, and in each round, it simply involves a single line search, followed by an updating of weights for all actions.

A second issue with using online-learning in problems such as tracking, 
where $N$ is very large, is that the regret to the {\em best action} is not an effective measure of performance. 
For problems such as tracking, one expects to have a lot of actions that are
close to the action with the lowest loss. As these actions also have low
loss, measuring performance with respect to a small group of actions that
perform well is extremely reasonable -- see, for example,
Figure~\ref{fig:quantile}.

In this paper, we address this issue by introducing a new notion of regret, which is more natural
for practical applications. We order the cumulative losses of all actions from lowest to highest and define the {\em regret of the learner to the top $\epsilon$-quantile} to be the difference
between the cumulative loss of the learner and the $\lfloor
\epsilon N \rfloor$-th element in the sorted list.

\begin{figure}
\begin{center}
\includegraphics[width=0.4\textwidth]{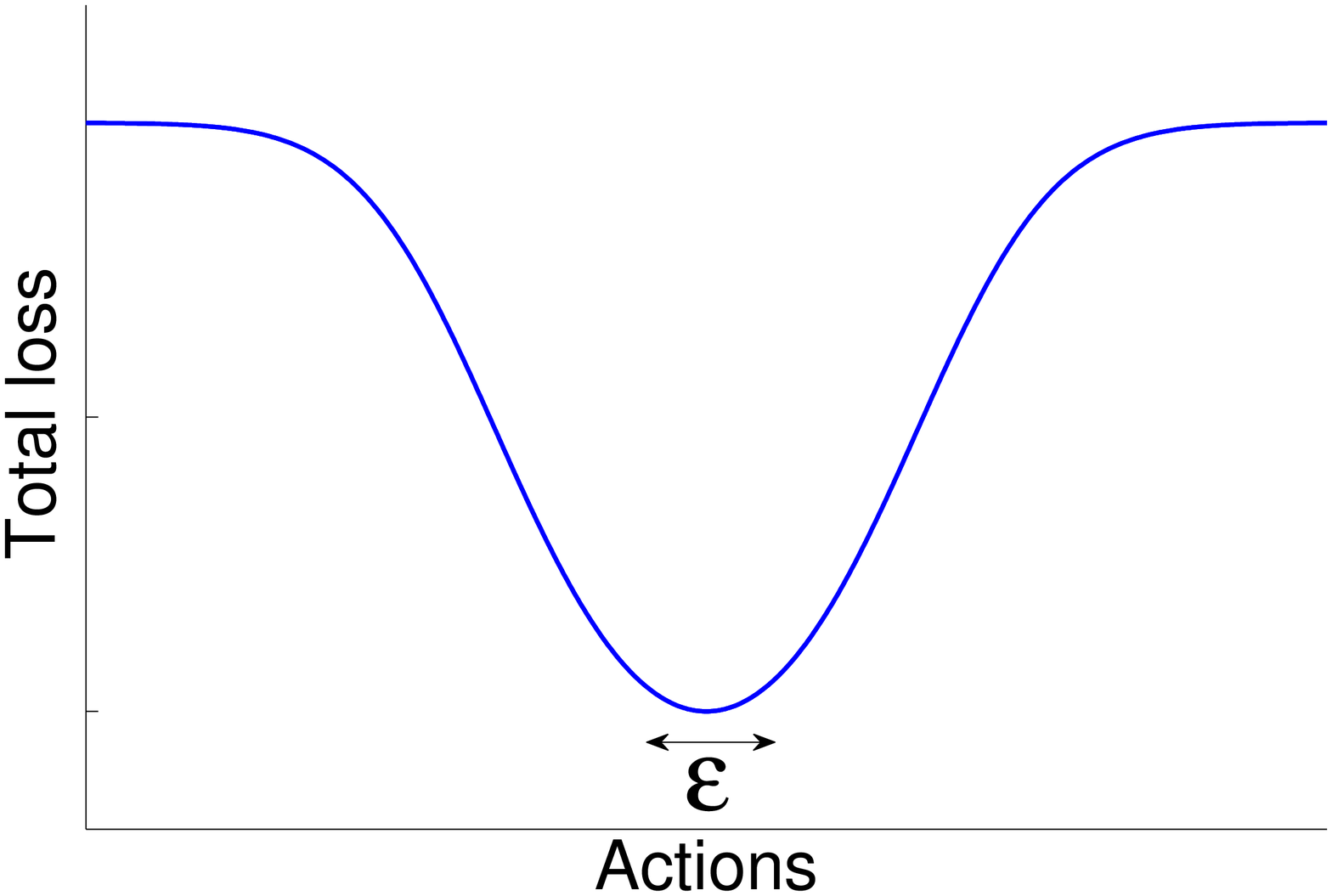}
\end{center}
\caption{A new notion of regret.
Suppose each action is a point on a line, and the total losses are as given
in the plot.
The regret to the top $\epsilon$-quantile is the difference between the
learner's total loss and the total loss of the worst action in the indicated
interval of measure $\epsilon$.
}
\label{fig:quantile}
\end{figure}

We prove that for NormalHedge, the regret to the top
$\epsilon$-quantile of actions is at most
\[
O\left(\sqrt{T \ln \frac{1}{\epsilon}} + \ln^2 N\right),
\]
which holds {\em simultaneously for all $T$ and $\epsilon$}. If we set
$\epsilon = 1/N$, we get that the regret to the best action is
upper-bounded by $O\left(\sqrt{T \ln N} + \ln^2 N\right)$, which is only
slightly worse than the bound achieved by Hedge with optimally-tuned
parameters. Notice that in our regret bound, the term involving $T$ has no
dependence on $N$. In contrast, Hedge cannot achieve a regret-bound of this
nature uniformly for all $\epsilon$.  (For details on how Hedge can be
modified to perform with our new notion of regret, see Section~\ref{sec:related}).

NormalHedge works by assigning each action $i$ a potential; actions which have lower cumulative loss than the algorithm are assigned a potential
$\exp(\Rg{i}{t}^2/2 c_t)$, where $\Rg{i}{t}$ is the regret of action $i$
and $c_t$ is an adaptive scale parameter, which is adjusted from one round
to the next, depending on the loss-sequences.
Actions which have higher cumulative loss than the algorithm
are assigned potential $1$. The weight assigned to an action in each round is then 
proportional to the derivative of its potential. One can also interpret Hedge as a potential-based algorithm, and under this interpretation, the potential assigned by Hedge to action $i$ is proportional to $\exp(\eta
\Rg{i}{t})$.
\changeto{This potential used by Hedge and other
related algorithms differs significantly from the one we use.}{This
potential used by Hedge differs significantly from the one we use.}
Although other potential-based methods have been
considered in the context of online learning~\cite{CBL03}, our potential function is very novel, and to the best of our
knowledge, has not been studied
in prior work. Our proof techniques are also different from previous
potential-based methods. 
Another useful property of NormalHedge, which Hedge does not possess, is that it assigns zero weight to any action whose cumulative loss is larger than
the cumulative loss of the algorithm itself. In other words, non-zero
weights are assigned only to actions which perform better than the
algorithm. In most applications\changeto{ of DTOL}{,} we expect a small set of the actions
to perform significantly better than most of the actions. \changeto{As the regret
of the hedging algorithm is guaranteed to be small, this means that the
algorithm will perform better than most of the actions and will 
therefore assign them zero probability.}{The regret of the algorithm is
guaranteed to be small, which means that the algorithm will perform better
than most of the actions and thus assign them zero probability.}

\cite{CL06,HK08} have proposed more recent solutions to DTOL in which the regret of Hedge to the best action is upper bounded by a function of $L$, the
loss of the best action, or by a function of the variations in the losses. These bounds
can be sharper than the bounds with respect to $T$. Our analysis (and in fact, to our knowledge, any analysis based on potential functions in the style
of~\cite{Gen03,CBL03}) do not directly yield these kinds of bounds. We
therefore leave open the question of finding an adaptive algorithm for DTOL which has regret upper-bounded by a function that depends on the loss 
of the best action.

The rest of the paper is organized as follows. In Section 2, we provide
NormalHedge. In Section 3, we provide an example that illustrates the
suboptimality of standard online learning algorithms, when the parameter is
not set properly. In Section 4, we discuss Related Work. In Section 5, we
present some outlines of the proof. The proof details are in the
Supplementary Materials.

\section{Algorithm} \label{sec:alg}

\subsection{Setting}

We consider the decision-theoretic framework for online learning.
In this setting, the learner is given access to a set of $N$ actions, where
$N \geq 2$. In round $t$, the learner chooses a weight distribution $\p{t} =
(\pr{1}{t}, \ldots, \pr{N}{t})$ over the actions $1, 2, \ldots, N$.
Each action $i$ incurs a loss $\ls{i}{t}$, and the learner incurs the
expected loss under this distribution:
$$ \ls{A}{t} = \sum_{i=1}^N \pr{i}{t}\ls{i}{t}. $$
The learner's instantaneous regret to an action $i$ in round $t$ is
$\rg{i}{t} = \ls{A}{t} - \ls{i}{t}$,
and its (cumulative) regret to an action $i$ in the first $t$
rounds is $$ \Rg{i}{t} = \sum_{\tau=1}^t \rg{i}{\tau}. $$
We assume that the losses $\ls{i}{t}$ lie in an interval of length $1$
(\emph{e.g.}~$[0,1]$ or $[-1/2,1/2]$; the sign of the loss does not
matter).
The goal of the learner is to minimize this cumulative regret $\Rg{i}{t}$
to any action $i$ (in particular, the best action), for any value of $t$.

\subsection{Normal-Hedge}

\begin{figure}[t]
\begin{center}
\framebox[\textwidth]{\begin{minipage}{0.9\textwidth}
\vskip 0.1in
Initially:  Set $\Rg{i}{0} = 0$, $\pr{i}{1} = 1/N$ for
each $i$. \\
For $t = 1, 2, \ldots$
\begin{enumerate}
\item Each action $i$ incurs loss $\ls{i}{t}$.
\item Learner incurs loss $\ls{A}{t} = \sum_{i=1}^N \pr{i}{t}
\ls{i}{t}$.
\item Update cumulative regrets:
$\Rg{i}{t} = \Rg{i}{t-1} + (\ls{A}{t} - \ls{i}{t})$ for each $i$.
\item Find $c_t > 0$ satisfying
$\frac{1}{N} \sum_{i=1}^N \exp\left(\frac{([\Rg{i}{t}]_+)^2}{2c_t}\right)
= e$.
\item Update distribution for round $t+1$:
$\pr{i}{t+1} \propto \frac{[\Rg{i}{t}]_+}{c_t}
\exp\left(\frac{([\Rg{i}{t}]_+)^2}{2c_t}\right)$
for each $i$.
\end{enumerate}
\end{minipage}}
\caption{The Normal-Hedge algorithm.}
\label{fig:alg}
\end{center}
\vskip -0.2in
\end{figure}

Our algorithm, Normal-Hedge, is based on a potential function reminiscent
of the half-normal distribution, specifically
\begin{equation} \label{eqn:potential}
\phi(x,c) \ = \ \exp\left(\frac{([x]_+)^2}{2c}\right)
\quad \text{for $x \in \R, c > 0$}
\end{equation}
where $[x]_+$ denotes $\max\{0,x\}$.
It is easy to check that this function is separately convex in $x$ and $c$,
differentiable, and twice-differentiable except at $x = 0$.

In addition to tracking the cumulative regrets $\Rg{i}{t}$ to each action
$i$ after each round $t$, the algorithm also maintains a scale parameter
$c_t$. This is chosen so that the average of the potential, over all
actions $i$, evaluated at
$\Rg{i}{t}$ and $c_t$, remains constant at $e$:
\begin{equation} \label{eq:potential}
\frac1N \sum_{i=1}^N \exp\left(\frac{([\Rg{i}{t}]_+)^2}{2c_t}\right)
\ = \ e.
\end{equation}
We observe that since $\phi(x,c)$ is convex in $c > 0$, we can determine
$c_t$ with a line search.

The weight assigned to $i$ in round $t$ is set proportional to the
first-derivative of the potential, evaluated at $\Rg{i}{t-1}$ and
$c_{t-1}$:
\begin{eqnarray*}
\pr{i}{t}
& \propto & \left. \frac{\partial}{\partial x} \phi(x,c)
\right \vert_{x=\Rg{i}{t-1},c=c_{t-1}}
\ = \
\frac{[\Rg{i}{t-1}]_+}{c_{t-1}} \exp\left(
\frac{([\Rg{i}{t-1}]_+)^2}{2c_{t-1}}
\right).
\end{eqnarray*}
Notice that the actions for which $\Rg{i}{t-1} \leq 0$ receive zero weight in round
$t$.

We summarize the learning algorithm in Figure~\ref{fig:alg}.

\section{An Illustrative Example} \label{sec:expts}

In this section, we present an example to illustrate that setting the
parameters of DTOL algorithms as a function of $N$, the
total number of actions, is suboptimal. To do this, we compare the
performance of NormalHedge with two representative algorithms:
a version of Hedge due to~\cite{CBMS07}, and the Polynomial Weights
algorithm, due to~\cite{GLS01,Gen03}.
Our experiments with this example indicate that the performance of both
these algorithms suffer because of the suboptimal setting of the
parameters; on the other hand, NormalHedge automatically adapts to the
loss-sequences of the actions.

The main feature of our example is that the effective number of actions $n$
(\emph{i.e.}~the number of distinct actions) is smaller than the total
number of actions $N$. 
Notice that without prior knowledge of the actions and their
loss-sequences, one cannot determine the effective number actions in
advance; as a result, there is no direct method by which Hedge and
Polynomial Weights could set their parameters as a function of $n$. 

Our example attempts to model a practical scenario where one often finds multiple actions with loss-sequences which are almost
identical. For example, in the tracking problem, groups of paths which are
very close together in the state space, will have very close
loss-sequences. Our example indicates that in this case, the performance of
Hedge and the Polynomial Weights will depend on the discretization of the
state space, however, NormalHedge will comparatively unaffected by such discretization.

Our example has four parameters: $N$, the total number of actions; $n$, the
effective number of actions (the number of distinct actions); $k$, the
(effective) number of good actions;
and $\epsilon$, which indicates how much better the good actions are
compared to the rest.
Finally, $T$ is the number of rounds.

The instantaneous losses of the $N$ actions are represented by a $N \times
T$ matrix $B_N^{\varepsilon,k}$; the loss of action $i$ in round $t$ is the
$(i,t)$-th entry in the matrix.
The construction of the matrix is as follows.
First, we construct a (preliminary) $n \times T$ matrix $A_n$ based
on the $2^d \times 2^d$ Hadamard matrix, where $n = 2^{d+1}-2$.
This matrix $A_n$ is obtained from the $2^d \times 2^d$ Hadamard matrix by
(1) deleting the constant row, (2) stacking the remaining rows on top of
their negations, (3) repeating each row horizontally $T/2^d$ times, and
finally, (4) halving the first column.
We show $A_6$ for concreteness:
\[
A_6 = \left[
\begin{array}{rrrr|rrrr|rrrr|r}
-\nicefrac12 & +1 & -1 & +1 & -1 & +1 & -1 & +1 & -1 & +1 & -1 & +1 & \ldots \\
-\nicefrac12 & -1 & +1 & +1 & -1 & -1 & +1 & +1 & -1 & -1 & +1 & +1 & \ldots \\
-\nicefrac12 & +1 & +1 & -1 & -1 & +1 & +1 & -1 & -1 & +1 & +1 & -1 & \ldots \\
+\nicefrac12 & -1 & +1 & -1 & +1 & -1 & +1 & -1 & +1 & -1 & +1 & -1 & \ldots \\
+\nicefrac12 & +1 & -1 & -1 & +1 & +1 & -1 & -1 & +1 & +1 & -1 & -1 & \ldots \\
+\nicefrac12 & -1 & -1 & +1 & +1 & -1 & -1 & +1 & +1 & -1 & -1 & +1 & \ldots \\
\end{array}
\right]
\]
If the rows of $A_n$ give the losses for $n$ actions over time, then it is
clear that on average, no action is better than any other.
Therefore for large enough $T$, for these losses, a typical algorithm will
eventually assign all actions the same weight.
Now, let $A_n^{\varepsilon,k}$ be the same as $A_n$ except that
$\varepsilon$ is subtracted from each entry of the first $k$ rows,
\emph{e.g.}
\[
A_6^{\varepsilon,2} = \left[
\begin{array}{rccc|cccc|r}
-\nicefrac12 - \varepsilon & +1 - \varepsilon & -1 - \varepsilon & +1 - \varepsilon & -1
- \varepsilon& +1 - \varepsilon & -1 - \varepsilon & +1 - \varepsilon & \ldots \\
-\nicefrac12 - \varepsilon & -1 - \varepsilon & +1 - \varepsilon & +1 - \varepsilon & -1
- \varepsilon& -1 - \varepsilon & +1 - \varepsilon & +1 - \varepsilon & \ldots \\
-\nicefrac12 & +1 & +1 & -1 & -1 & +1 & +1 & -1 & \ldots \\
+\nicefrac12 & -1 & +1 & -1 & +1 & -1 & +1 & -1 & \ldots \\
+\nicefrac12 & +1 & -1 & -1 & +1 & +1 & -1 & -1 & \ldots \\
+\nicefrac12 & -1 & -1 & +1 & +1 & -1 & -1 & +1 & \ldots \\
\end{array}
\right].
\]
Now, when losses are given by $A_n^{\varepsilon,k}$, the first $k$ actions
(the good actions) perform better than the remaining $n - k$; so, for large
enough $T$, a typical algorithm will eventually recognize this and assign
the first $k$ actions equal weights (giving little or no weight to the
remaining $n - k$).
Finally, we artificially replicate each action (each row) $N/n$ times to
yield the final loss matrix $B_N^{\varepsilon,k}$ for $N$ actions:
\[
B_N^{\varepsilon,k} = \left. \left[
\begin{array}{c}
A_n^{\varepsilon,k} \\
A_n^{\varepsilon,k} \\
\vdots \\
A_n^{\varepsilon,k}
\end{array} \right] \right\}
\text{$N/n$ replicates of $A_n^{\varepsilon,k}$}.
\]

The replication of actions significantly affects the behavior of algorithms
that set parameters with respect to the number of actions $N$, which is
inflated compared to the effective number of actions $n$.
NormalHedge, having no such parameters, is completely unaffected by the
replication of actions.

We compare the performance of NormalHedge to two other representative
algorithms, which we call ``Exp'' and ``Poly''.
Exp is a time/variation-adaptive version of Hedge (exponential weights) due
to \cite{CBMS07} (roughly, $\eta_t = O(\sqrt{(\log N)/\mathrm{Var}_t})$,
where $\mathrm{Var}_t$ is the cumulative loss variance).
Poly is polynomial weights~\cite{GLS01,Gen03}, which has a parameter $p$
that is typically set as a function of the number of actions; we set $p = 2
\ln N$ as is recommended to guarantee a regret bound comparable to that of
Hedge.

Figure~\ref{fig:regret} shows the regrets to the best action versus the
replication factor $N/n$, where the effective number of actions $n$ is
held fixed. Recall that Exp and Poly have parameters set with respect to the 
number of actions $N$. 

We see from the figures that NormalHedge is
completely unaffected by the replication of actions; no matter how many
times the actions may be replicated, the performance of NormalHedge stays
exactly the same.
In contrast, increasing the replication factor affects the performance of Exp
and Poly:
Exp and Poly become more sensitive to the changes in the total losses of
the actions (\emph{e.g.}~the base of the exponent in the weights assigned
by Exp increases with $N$); so when there are multiple good actions
(\emph{i.e.}~$k > 1$), Exp and Poly are slower to stabilize their weights
over these good actions.
When $k = 1$, Exp and Poly actually perform better using the inflated value
$N$ (as opposed to $n$), as this causes the slight advantage of the single
best action to be magnified. However, this particular case is an anomaly;
this does not happen even for $k = 2$.
We note that if the parameters of Exp and Poly were set to be a function of
$n$, instead of $N$, then, then their performance would also not depend on
the replication factor (the peformance would be the same as the $N/n=1$
case).
Therefore, the degradation in performance of Exp and Poly is solely due to
the suboptimality in setting their parameters.

\begin{figure}
\vskip 0.2in
\begin{center}
\begin{tabular}{cc}
\includegraphics[width=0.5\textwidth]{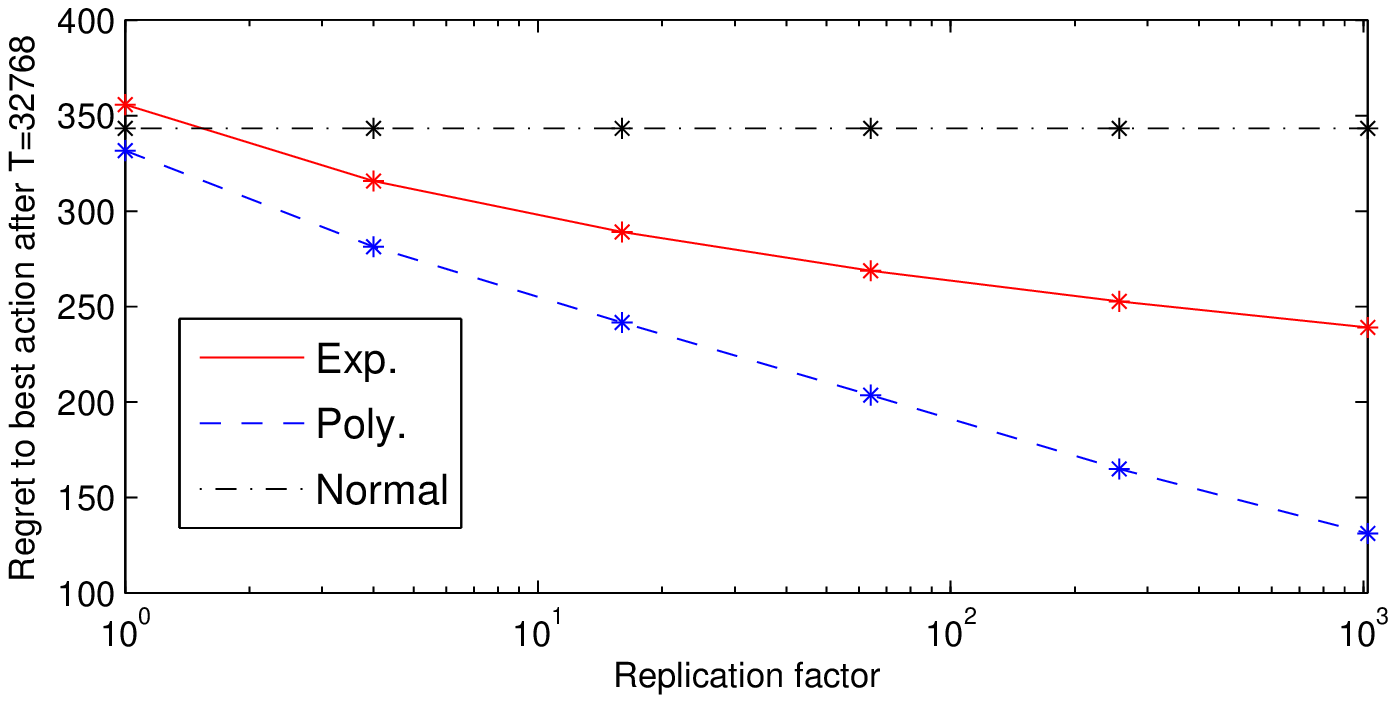} &
\includegraphics[width=0.5\textwidth]{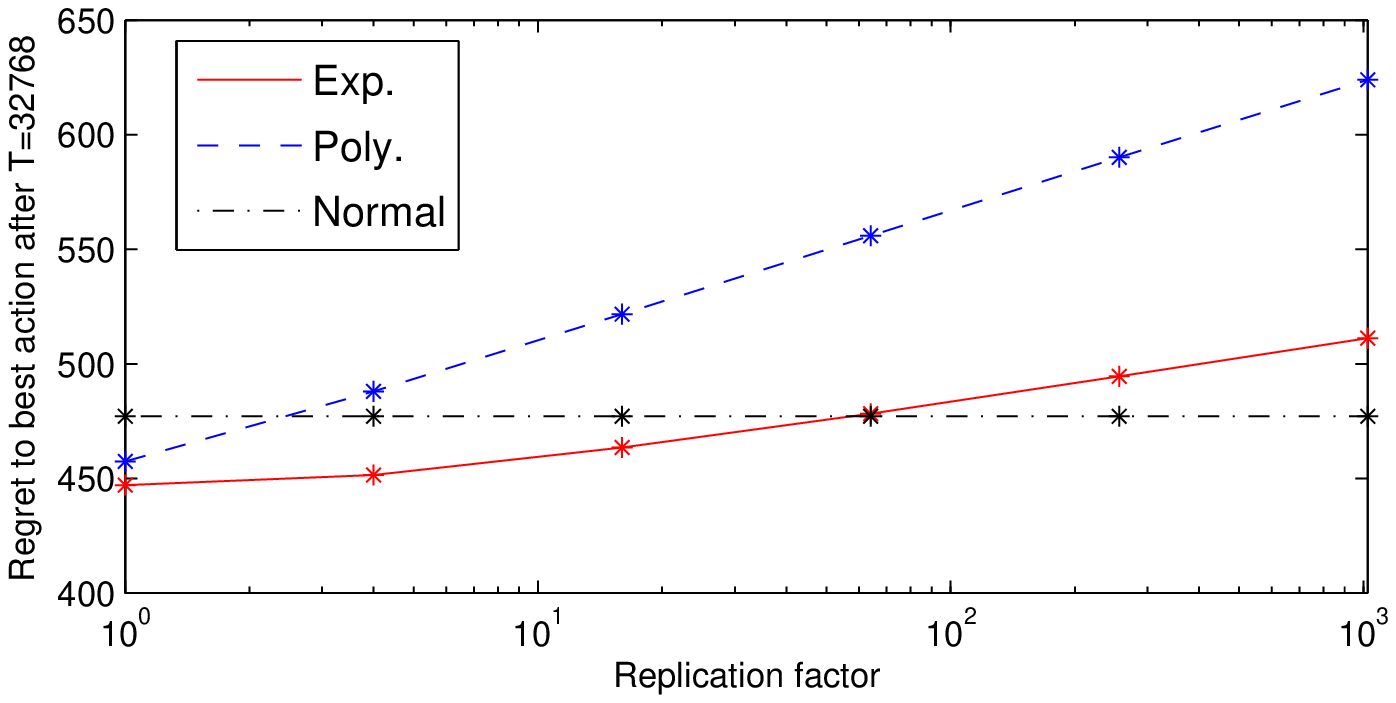} \\
$k = 1$ & $k = 2$ \\
\includegraphics[width=0.5\textwidth]{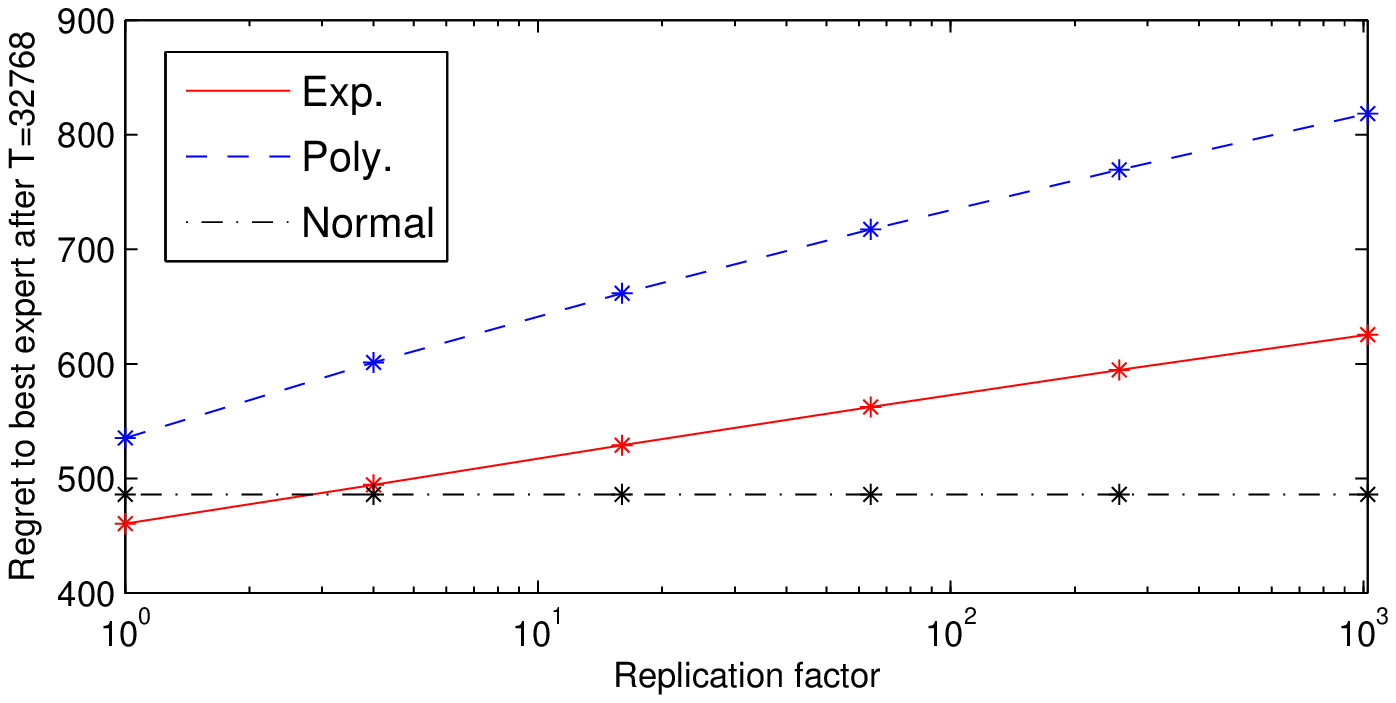} &
\includegraphics[width=0.5\textwidth]{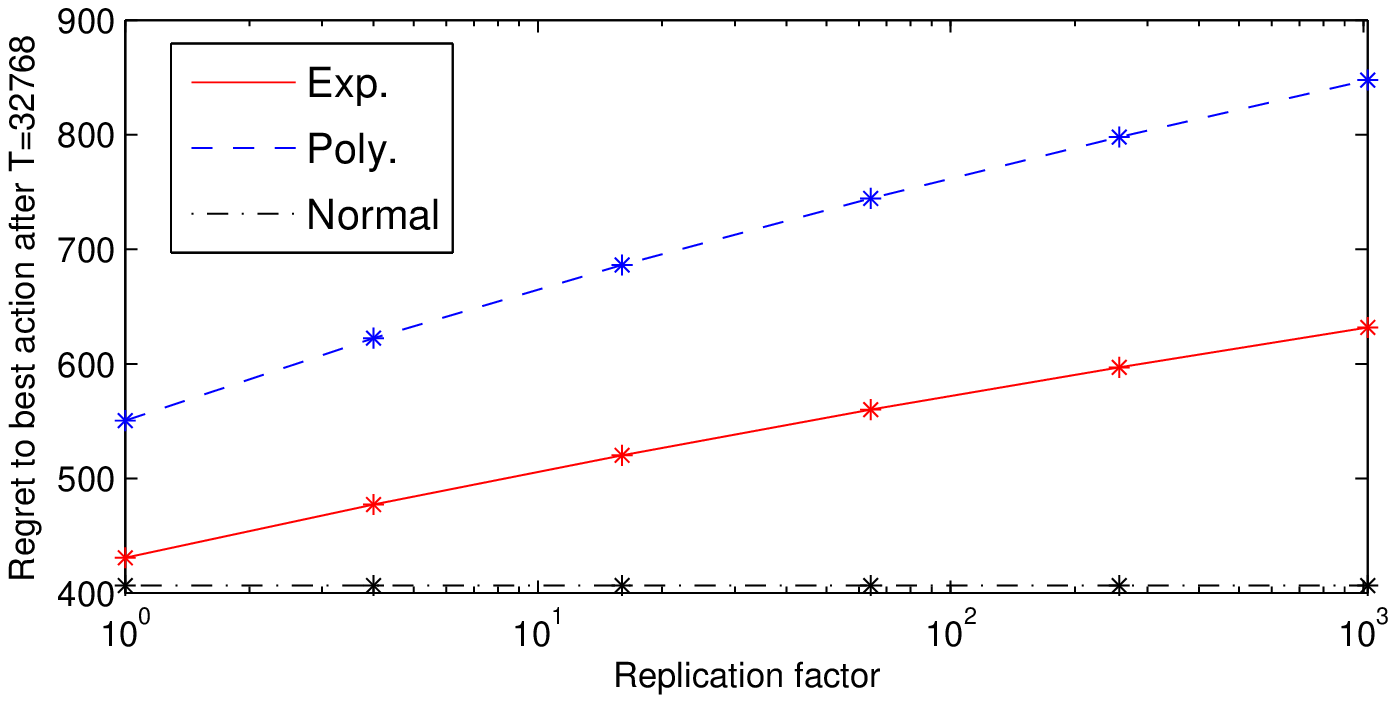} \\
$k = 8$ & $k = 32$
\end{tabular}
\caption{
Regrets to the best action after $T = 32768$ rounds, versus replication
factor $N/n$.
Recall, $k$ is the (effective) number of good actions.
Here, we fix $n=126$ and $\epsilon = 0.025$.
}
\label{fig:regret}
\end{center}
\vskip -0.2in
\end{figure} 

\section{Related work} \label{sec:related}
There has been a large amount of literature on various aspects of DTOL. The Hedge algorithm of \cite{FS97} belongs to a more general
family of algorithms, called the exponential weights algorithms; these are 
originally based
on Littlestone and Warmuth's Weighted Majority algorithm~\cite{LW94}, and
they have been well-studied.

\changeto{A significant amount of literature has also been devoted to improving the adaptivity
of these algorithms, however prior to our work, a simple, fully adaptive
algorithm for DTOL was not known. In this section, we highlight 
some of these exponential weights algorithms, as well as
a few others, as they relate to NormalHedge and parameter adaptivity.}{}

The standard measure of regret in most of these works is the regret to the
best action. The original Hedge algorithm has a regret bound of $O(\sqrt{T
\log N})$. Hedge uses a fixed learning rate $\eta$ for all iterations, and
requires \changeto{us}{one} to set $\eta$ as a function of the total number of iterations $T$. As a result, its regret bound also holds only for a
fixed $T$. \changeto{\cite{CBFH+97} provides an algorithm that}{The
algorithm of \cite{CBFH+97}} guarantees a regret
bound of $O(\sqrt{T \log N})$ to the best action\changeto{,}{} uniformly
for all $T$\changeto{,}{} by
using a doubling trick.
Time-varying learning rates for exponential weights algorithms were
considered in \cite{ACBG02}; \changeto{}{there,} they show that if $\eta_t = \sqrt{8\ln(N)/t}$, then using exponential weights with $\eta = \eta_t$ in round $t$ guarantees regret bounds of $\sqrt{2T\ln N} + O(\ln N)$ for any $T$. 
This bound provides a better regret to the best action than we do. However,
 \changeto{it}{this method} is still susceptible to poor performance, as illustrated in the example
 in Section~\ref{sec:expts}.
 Moreover, they do not consider our notion of regret.
\changeto{These algorithms can also be partly analyzed with respect to}
{Though not explicitly considered in previous works, the exponential weights
algorithms can be partly analyzed  with respect to the}
regret to the
top $\epsilon$-quantile. For any {\em fixed} $\epsilon$, Hedge
can be modified by setting $\eta$ as a function of this $\epsilon$ such that the regret to the top $\epsilon$-quantile is at most $O(\sqrt{T \log(1/\epsilon)})$. The problem with this solution is that it requires that \emph{the learning rate to be set as a function of that particular $\epsilon$} (roughly $\eta = \sqrt{(\log 1/\epsilon)/T}$). Therefore, unlike our bound, this bound does not hold uniformly for all $\epsilon$.
One way to ensure a bound for all $\epsilon$ uniformly
is to run $\log N$ copies of Hedge, each with a
learning rate set as a function of a different value of $\epsilon$. A final
master copy of the Hedge algorithm then looks at the \changeto{predictions
of}{probabilities given by} these
\changeto{multiple}{subordinate} copies\changeto{, and produces a final
prediction}{to give the final probabilities}. However, this procedure
adds an additive $O(\sqrt{T \log \log N})$ factor to the regret to the
$\epsilon$ quantile of actions, for {\em any} $\epsilon$. More importantly, this
procedure is also impractical for real applications, where one might be already working with a large set of actions. In
contrast, our solution NormalHedge is clean and simple, and we guarantee a regret bound for all values of $\epsilon$ uniformly, without any extra overhead.

More recent work in \cite{YEYS04,CBMS07,HK08} provide algorithms with significantly improved bounds when the total loss of the best action is
small, or when the total variation in the losses is small.
These bounds do not explicitly depend on $T$, and thus can often be
sharper than ones that do (including ours). We stress, however, that these
methods use a different notion of regret, and their learning
rates depend explicitly on $N$.

Besides exponential weights, another important class of online learning
algorithms are the polynomial weights algorithms studied in
\cite{GLS01,Gen03,CBL03}. These algorithms too require a parameter; this
parameter does not depend on the number of rounds $T$, but
 depends crucially on the number of actions $N$.
The weight assigned to action $i$ in round $t$ is proportional to
$([R_{i,t-1}]_+)^{p-1}$ for some $p > 1$; setting $p = 2 \ln N$ yields
regret bounds of the form $\sqrt{2eT(\ln N - 0.5})$ for any $T$.
Our algorithm and polynomial weights share the feature that zero weight is
given to actions that are performing worse than the algorithm, although the
degree of this weight sparsity is tied to the performance of the algorithm.
Finally, \cite{HP05} derive a time-adaptive variation of the
follow-the-(perturbed) leader algorithm \cite{Han57,KV05} by scaling the
perturbations by a parameter that depends on both $t$ and $N$.

\section{Analysis} \label{sec:analysis}

\subsection{Main results}
Our main result is the following theorem.

\begin{theorem}
If Normal-Hedge has access to $N$ actions, then for all loss sequences, for
all $t$, for all $0 < \epsilon \leq 1$ and for all $0 < \delta \leq 1/2$,
the regret of the algorithm to the top $\epsilon$-quantile of the actions
is at most
\[  \sqrt{ (1 + \ln(1/\epsilon)) \left( 3(1 + 50 \delta) t + \frac{16 \ln^2
N}{\delta}( \frac{10.2}{\delta^2} + \ln N)\right)}. \]
In particular, with $\epsilon=1/N$, the regret to the best action is at most
\[ \sqrt{ (1 + \ln N) \left(3(1 + 50 \delta)t + \frac{16 \ln^2
N}{\delta}(\frac{10.2}{\delta^2} + \ln N)\right)}. \]
\label{thm:main}
\end{theorem}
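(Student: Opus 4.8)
The plan is to reduce the entire theorem to a single scalar estimate, namely
$c_t \le \frac{3}{2}(1+50\delta)t + \frac{8\ln^2 N}{\delta}\left(\frac{10.2}{\delta^2} + \ln N\right)$,
and then convert this bound on the scale parameter into the stated quantile regret. The conversion is the easy half. By definition of the top $\epsilon$-quantile, at least $\lfloor \epsilon N\rfloor$ actions have cumulative regret at least $\Rg{(\epsilon)}{t}$ (the regret to the quantile). Since every summand is nonnegative and $\phi$ is increasing in $[x]_+$, the defining constraint $\frac1N\sum_i \exp(([\Rg{i}{t}]_+)^2/(2c_t)) = e$ of step~4 gives $e \ge \epsilon\exp(([\Rg{(\epsilon)}{t}]_+)^2/(2c_t))$, and taking logarithms yields $([\Rg{(\epsilon)}{t}]_+)^2 \le 2c_t(1 + \ln(1/\epsilon))$. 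Substituting the claimed bound on $c_t$ reproduces the theorem exactly, and the best-action case is just $\epsilon = 1/N$. So all the real work is in controlling $c_t$.

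To control $c_t$ I would track the total potential $\Phi_t(c) := \sum_i \phi(\Rg{i}{t}, c)$, which the algorithm pins to $Ne$ at $c = c_t$. The heart of the argument is a one-step drift inequality: freezing the scale at $c_{t-1}$, expand $\phi(\Rg{i}{t-1} + \rg{i}{t}, c_{t-1})$ to second order in the instantaneous regret $\rg{i}{t}$. Because the weights are chosen proportional to $\partial_x\phi$, the first-order terms sum to $(\sum_j w_{j,t})\sum_i \pr{i}{t}\rg{i}{t} = 0$, since $\sum_i \pr{i}{t}\rg{i}{t} = \ls{A}{t} - \ls{A}{t} = 0$. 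What remains is a second-order remainder governed by $\partial_x^2\phi = (\frac1c + \frac{([x]_+)^2}{c^2})\phi$; here one must be careful that $\phi$ is only once differentiable at $x=0$, so I would use one-sided convexity bounds there. Crucially, one cannot bound the remainder by $\rg{i}{t}^2 \le 1$ naively, because the dangerous part $\frac{([x]_+)^2}{c^2}\phi = \frac{[x]_+}{c}\,\partial_x\phi$ is large exactly for high-weight actions; but high-weight actions are precisely those whose instantaneous regret is forced small, since placing large weight on action $i$ drags $\ls{A}{t}$ toward $\ls{i}{t}$. Exploiting this self-correcting structure, the frozen-scale increase $\Phi_t(c_{t-1}) - Ne$ should come out $O(N/c_{t-1})$.

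The final ingredient converts this potential increase into an increase of the scale. Since $\Phi_t(c_t) = Ne \le \Phi_t(c_{t-1})$ and $\Phi_t$ is decreasing in $c$, I would write $\Phi_t(c_{t-1}) - Ne = -\int_{c_{t-1}}^{c_t} \partial_c\Phi_t(c)\,dc$ and lower-bound $-\partial_c\Phi_t(c) = \frac1c\sum_i a_i e^{a_i}$, where $a_i = ([\Rg{i}{t}]_+)^2/(2c)$. Writing $u_i = e^{a_i}$, the constraint $\sum_i u_i = \Phi_t(c) \ge Ne$ together with convexity of $u\mapsto u\ln u$ forces $\sum_i a_i e^{a_i} = \sum_i u_i\ln u_i \ge Ne$, so $-\partial_c\Phi_t(c) \ge Ne/c_t$ throughout $[c_{t-1},c_t]$. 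Combining with the $O(N/c_{t-1})$ drift bound gives a per-round increment $c_t - c_{t-1} = O(1)$, which telescopes to $c_t = O(t)$ with leading constant $\frac32$.

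The hard part, and the source of every lower-order term in the statement, is the early regime. When all regrets are small, $c_t$ is small, the second-derivative bound (scaling like $1/c$) is weak, and the lower bound $-\partial_c\Phi_t \ge Ne/c$ degrades; indeed $\Phi_0 = N < Ne$, so $c_0$ is not even well defined, and $c_t$ can jump substantially before the losses separate the actions. I would isolate an initial phase, bound $c_t$ there crudely in terms of $\ln N$, and only run the telescoping argument once the potential has stabilized at $Ne$; the slack parameter $\delta$ is exactly what lets me trade the multiplicative factor $(1+50\delta)$ on the leading $t$ term against the additive $\frac{\ln^2 N}{\delta}(\frac{1}{\delta^2} + \ln N)$ correction. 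Making the constants in the drift inequality and in this initial-phase accounting precise (so as to land on $3(1+50\delta)$ and $\frac{16}{\delta}\cdot\frac{10.2}{\delta^2}$) is the most delicate and calculation-heavy step; the conceptual content, however, lies entirely in the first-order cancellation and the convexity lower bound on $-\partial_c\Phi_t$.
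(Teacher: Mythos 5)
Your skeleton is essentially the paper's: the conversion from a bound on $c_t$ to the quantile regret is Lemma~\ref{lem:scale2regret}, the first-order cancellation (weights proportional to $\partial_x \phi$, with the kink at $x=0$ handled by one-sided convexity) is Lemma~\ref{lem:cdiff} together with Lemma~\ref{lem:taylor}, your convexity lower bound $-\partial_c \Phi_t(c) \geq Ne/c$ is exactly Lemma~\ref{lem:lowerbound} and Corollary~\ref{cor:lowerbound}, and the two-phase split governed by $\delta$ matches Lemma~\ref{lem:lowt}. But your pivotal quantitative claim --- that the frozen-scale drift $\Phi_t(c_{t-1}) - Ne$ is $O(N/c_{t-1})$ because ``high-weight actions have instantaneous regret forced small'' --- is false. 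Concretely: suppose after some round exactly two actions are tied at regret $R$ with $\exp(R^2/(2c_t)) \approx N(e-1)/2$ and all other actions have nonpositive regret. The two leaders then carry weight $1/2$ each; if in the next round one incurs loss $0$ and the other loss $1$, each has instantaneous regret $\pm 1/2$, the first-order terms cancel exactly, and the second-order remainder is about $\frac{R^2}{4c_t^2}\exp\left(\frac{R^2}{2c_t}\right) = \Theta\left(\frac{N \ln N}{c_t}\right)$. The self-correction you invoke only yields $|\rg{i}{t+1}| \leq 1 - \pr{i}{t+1}$, which gives no savings when a few leaders share the weight. Feeding the true $\Theta((N\ln N)/c)$ drift into your denominator bound $Ne/c$ gives a per-round increment of $O(\ln N)$, hence $c_t = O(t \ln N)$ and regret $O(\sqrt{t \ln N \ln(1/\epsilon)})$ --- this is essentially the paper's bootstrap Lemma~\ref{lem:scale}, not the theorem, whose leading term $3(1+50\delta)t$ carries no factor of $\ln N$.

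The missing idea is that the dangerous second-derivative terms must \emph{not} be bounded against $Ne/c$ in aggregate; they must be compared term-by-term against the denominator, which has the same functional form. The paper (proof of Lemma~\ref{lem:scale2}) splits $\epp$ into $\sone$, where $[\Rg{i}{t+1}]_+ + 1 \leq \sqrt{2c_t\delta}$ (contribution at most $\delta e^\delta N$, harmless), and $\stwo$, where each numerator term $\frac{\rhog{i}{t}^2}{2c_t^2}\exp(\rhog{i}{t}^2/(2c_t))$ is matched pointwise to the denominator term $\frac{(\Rg{i}{t+1})^2}{2c_{t+1}^2}\exp((\Rg{i}{t+1})^2/(2c_{t+1}))$ for the same $i$; each ratio is at most $\frac{c_{t+1}^2}{c_t^2}\cdot\frac{e^{2\delta}}{(1-\delta)^2}$, where controlling the exponent shift uses precisely the crude bound $c_{t+1}-c_t \leq e^\delta(4 + 2\ln N)$ from the bootstrap. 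In my counterexample the denominator is itself $\Theta((N\ln N)/c_t)$, which is why the increment is $O(1)$ even though the drift is not $O(N/c_t)$. This two-pass structure is also where the constant $3/2$ in Lemma~\ref{lem:scale2} (hence the $3$ in the theorem) comes from: $1/2$ from the $\frac{1}{c}$ part of $\partial_x^2\phi$ against $Ne/c$, plus $1$ from the $\stwo$ comparison, plus $O(\delta)$ slop. Your proposal as written cannot reach the stated bound without adding this second pass.
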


The value $\delta$ in Theorem~\ref{thm:main} appears to be an artifact of
our analysis; we divide the sequence of rounds into two phases -- the
length of the first is controlled by the value of $\delta$ -- and bound the
behavior of the algorithm in each phase separately.
The following corollary illustrates the performance of our algorithm for
large values of $t$, in which case the effect of this first phase (and the
$\delta$ in the bound) essentially goes away.
\begin{corollary}
If Normal-Hedge
has access to $N$ actions, then, as
$t \rightarrow \infty$, the regret of Normal-Hedge to the top
$\epsilon$-quantile of actions approaches an upper bound of
\[\sqrt{3 t(1 + \ln(1/\epsilon)) + o(t) } \, .\]
In particular, the regret of Normal-Hedge to the 
best action approaches an upper bound of
\[ \sqrt{3 t (1 + \ln N) + o(t) }\, .\] 
\end{corollary}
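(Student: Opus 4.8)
The plan is to obtain the corollary as an immediate consequence of Theorem~\ref{thm:main} by treating $\delta$ as a tuning knob that is allowed to vary with $t$. Since the bound of Theorem~\ref{thm:main} holds \emph{for all} $0 < \delta \le 1/2$ simultaneously (for each fixed $t$, $\epsilon$, and loss sequence), I am free to pick, for each $t$, any admissible value of $\delta$ --- in particular a schedule $\delta = \delta(t)$ that shrinks to $0$ as $t \to \infty$. Writing the quantity under the square root as $(1 + \ln(1/\epsilon))\big(3(1+50\delta)t + \frac{16 \ln^2 N}{\delta}(\frac{10.2}{\delta^2} + \ln N)\big)$, the leading piece $3t$ is exactly the term appearing in the corollary, so the entire argument reduces to showing that, for a suitable choice of $\delta(t)$, every remaining contribution is $o(t)$ as $t \to \infty$ (with $N$ and $\epsilon$ held fixed).

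There is a genuine tension in choosing $\delta(t)$, and isolating the admissible window is the one substantive step. On one hand, the term $3 \cdot 50 \delta t = 150 \delta t$ forces $\delta \to 0$ in order to be $o(t)$. On the other hand, for fixed $N$ the additive term $\frac{16\ln^2 N}{\delta}(\frac{10.2}{\delta^2} + \ln N)$ behaves like $\Theta(\delta^{-3})$ as $\delta \to 0$, so $\delta$ must not decay too fast; more precisely we need $\delta^{-3} = o(t)$, i.e.\ $\delta \gg t^{-1/3}$. Any schedule inside this window works, and I would simply take $\delta = t^{-1/4}$, which is admissible ($\le 1/2$) for all large $t$. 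With this choice, $150\delta t = 150\, t^{3/4} = o(t)$ and the additive term is $O(t^{3/4}\ln^2 N) = o(t)$, so the expression under the root equals $(1 + \ln(1/\epsilon))\,(3t + o(t))$.

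Finally, since $\epsilon$ (and $N$) are fixed, the constant factor $1 + \ln(1/\epsilon)$ can be absorbed into the remainder, giving $3t(1+\ln(1/\epsilon)) + o(t)$ under the square root and hence the claimed bound $\sqrt{3t(1+\ln(1/\epsilon)) + o(t)}$. The ``in particular'' statement follows by specializing $\epsilon = 1/N$, so that $1 + \ln(1/\epsilon) = 1 + \ln N$, yielding $\sqrt{3t(1+\ln N) + o(t)}$. No step beyond this substitution and the elementary rate tradeoff above is required; the only thing to watch is exhibiting a single $\delta(t)$ in $(0,1/2]$ that simultaneously tames both the linear-in-$\delta$ and inverse-cubic-in-$\delta$ terms, which the $t^{-1/4}$ choice does.
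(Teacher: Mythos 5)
Your proposal is correct and is essentially the argument the paper intends: the paper gives no explicit proof of the corollary, remarking only that for large $t$ the effect of the first phase (and of $\delta$) ``essentially goes away,'' which is precisely your device of choosing a schedule $\delta(t)\to 0$ in Theorem~\ref{thm:main} (legitimate since $\delta$ is purely an analysis parameter and the theorem holds for all $0<\delta\le 1/2$ at each $t$). Your choice $\delta = t^{-1/4}$ in fact balances the $150\delta t$ and $\Theta(\delta^{-3}\ln^2 N)$ terms at $O(t^{3/4}) = o(t)$, so nothing further is needed.
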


The proof of Theorem \ref{thm:main} follows from a combination of
Lemmas~\ref{lem:scale2regret}, \ref{lem:lowt}, and \ref{lem:scale2},
and is presented in detail at the end of the current section.

\subsection{Regret bounds from the potential equation}

The following lemma relates the performance of the algorithm at time
$t$ to the scale $c_t$.

\begin{lemma} \label{lem:scale2regret}
At any time $t$, the regret to the best action can be bounded as:
\[ \max_{i} \Rg{i}{t} \leq \sqrt{2 c_t ( \ln N + 1)}  \]
Moreover, for any $0 \leq \epsilon \leq 1$ and any $t$, the regret to the
top $\epsilon$-quantile of actions is at most
\[ \sqrt{2 c_t (\ln (1/\epsilon) + 1)}. \]
\end{lemma}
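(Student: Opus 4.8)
My plan is to read both inequalities directly off the defining equation for the scale parameter, namely the invariant
\[ \frac1N \sum_{i=1}^N \exp\!\left(\frac{([\Rg{i}{t}]_+)^2}{2c_t}\right) = e, \qquad\text{equivalently}\qquad \sum_{i=1}^N \exp\!\left(\frac{([\Rg{i}{t}]_+)^2}{2c_t}\right) = Ne, \]
combined with a simple counting (Markov-type) argument on the potentials. The key observation is that every summand is at least $1$ and nonnegative, so controlling how much total ``mass'' $Ne$ the sum can have immediately limits how many actions can have large regret and how large any single regret can be.

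For the regret to the best action I would first note that this quantity equals $\max_i \Rg{i}{t}$, since the best action has the smallest cumulative loss and hence the largest regret. Because each term of the sum is nonnegative, each individual term is bounded by the whole sum: for every $i$ we have $\exp(([\Rg{i}{t}]_+)^2/(2c_t)) \le Ne$. Taking logarithms gives $([\Rg{i}{t}]_+)^2 \le 2c_t(\ln N + 1)$, hence $[\Rg{i}{t}]_+ \le \sqrt{2c_t(\ln N + 1)}$ for every $i$, and maximizing over $i$ yields the first claim. This is essentially the special case $\epsilon = 1/N$ of the second bound.

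For the regret to the top $\epsilon$-quantile I would run the same argument but track how many actions can exceed a given threshold. Fix $r \ge 0$ and let $m$ be the number of actions with $\Rg{i}{t} \ge r$; since each such action contributes a term at least $\exp(r^2/(2c_t))$, the invariant gives $m\exp(r^2/(2c_t)) \le Ne$, i.e. at most $Ne\,\exp(-r^2/(2c_t))$ actions have regret at least $r$. Choosing $r = \sqrt{2c_t(\ln(1/\epsilon)+1)}$ makes this bound equal to $Ne\cdot(\epsilon/e) = \epsilon N$. Since the regret to the top $\epsilon$-quantile is the $\lfloor \epsilon N\rfloor$-th largest of the $\Rg{i}{t}$, and at most $\epsilon N$ of them reach $r$, this $\lfloor \epsilon N\rfloor$-th largest value is bounded by $r = \sqrt{2c_t(\ln(1/\epsilon)+1)}$, as claimed.

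The bulk of this is routine manipulation of the invariant; the one delicate point I anticipate is the rounding in the quantile definition. The clean counting bound shows that at most $\epsilon N$ actions exceed the threshold, whereas the quantile is indexed by the integer $\lfloor \epsilon N\rfloor$, and one must also decide whether to count $\Rg{i}{t} \ge r$ or $> r$; matching the stated constant exactly therefore requires a careful treatment of the floor. I expect this to be resolved by using the fact that each of the remaining $N-m$ summands is at least $1$, which upgrades $m\exp(r^2/(2c_t)) \le Ne$ to the strict estimate $m(\exp(r^2/(2c_t)) - 1) \le N(e-1)$ and absorbs the off-by-one, so that the floor causes no loss in the final bound.
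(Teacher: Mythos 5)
Your proof is correct and is essentially the paper's own argument: the first part bounds a single potential term by the full sum $Ne$ coming from the invariant \eqref{eq:potential}, and your threshold-counting step for the quantile is just the contrapositive phrasing of the paper's proof, which observes that the at least $\epsilon N$ actions with regret at or above the quantile value contribute total potential at least $\epsilon N \exp\left(([\Rg{i}{t}]_+)^2/(2c_t)\right) \leq Ne$.

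One remark on the delicate point you flag at the end. The paper itself simply ignores the rounding (its proof speaks of ``the action with the $\epsilon N$-th highest regret''), so you are more careful than the source; but your proposed fix does not in fact fully absorb the floor. With $r = \sqrt{2c_t(1+\ln(1/\epsilon))}$, the refined estimate $m\left(e^{r^2/(2c_t)} - 1\right) \leq N(e-1)$ gives $m \leq \epsilon N (e-1)/(e-\epsilon)$, and for, say, $\epsilon N = 1.9$ this still permits $m = 1 = \lfloor \epsilon N \rfloor$ actions at or above the threshold, in which case the $\lfloor \epsilon N \rfloor$-th highest regret can exceed $r$: concretely, one action carrying potential $N(e-1)+1$ (all others at potential $1$) has regret $\sqrt{2c_t \ln(N(e-1)+1)}$, which exceeds $\sqrt{2c_t(1+\ln(1/\epsilon))}$ for $\epsilon$ just below $2/N$. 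This slack is intrinsic to the floor convention rather than to your argument --- the stated constant is exact for the $\lceil \epsilon N \rceil$-th best action, which is evidently the reading the paper's proof intends --- so your proposal matches the paper's proof at the paper's own level of rigor, and the overclaimed sentence about the floor should just be weakened or the quantile read with a ceiling.
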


\begin{proof}
We use $\eplus$  to denote the actions that have non-zero
weight on iteration $t$. 
The first part of the lemma follows from the fact that,
for any action $i \in \eplus$,
\begin{equation*}
\exp\left( \frac{(\Rg{i}{t})^2}{2c_t} \right)
\ = \
\exp\left(\frac{([\Rg{i}{t}]_+)^2}{2 c_t}\right)
\ \leq \
\sum_{i'=1}^N \exp\left( \frac{([\Rg{i'}{t}]_+)^2}{2 c_t} \right)
\ \leq \ Ne
\end{equation*}
which implies $\Rg{i}{t} \leq \sqrt{2c_t(\ln N + 1)}$.

For the second part of the lemma, let $\Rg{i}{t}$ denote the regret of our
algorithm to the action with the $\epsilon N$-th highest regret. Then, the total potential of the actions with regrets greater than or equal to $\Rg{i}{t}$ is at least:
\[ \epsilon N \exp\left( \frac{([\Rg{i}{t}]_+)^2}{2 c_t} \right) \leq N e \] 
from which the second part of the lemma follows.
\end{proof}

\subsection{Bounds on the scale $c_t$ and proof of Theorem~\ref{thm:main}}

In Lemmas~\ref{lem:lowt} and~\ref{lem:scale2}, we bound the growth of
the scale $c_t$ as a function of the time $t$. 

The main outline of the proof of Theorem~\ref{thm:main} is as follows. As
$c_t$ increases monotonically with $t$, we can divide the rounds $t$ into two 
phases, $t < t_0$ and $t \geq t_0$, where $t_0$ is the first time such that
$$ c_{t_0} \geq \frac{4 \ln^2 N}{\delta} + \frac{16 \ln N}{\delta^3},$$ for
some fixed $\delta \in (0,1/2)$.
We then show bounds on the growth of $c_t$ for each phase separately.
Lemma~\ref{lem:lowt} shows that $c_t$ is not too large at the end of the
first phase, while Lemma~\ref{lem:scale2} bounds the per-round growth of
$c_t$ in the second phase.

\begin{lemma}
For any time $t$,
\[ c_{t+1} \leq 2 c_t(1 + \ln N) + 3\, . \]
\label{lem:lowt}
\end{lemma}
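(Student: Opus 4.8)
The plan is to bound $c_{t+1}$ by exhibiting an explicit value $c^\star$ at which the potential average is already at most $e$, and then invoking monotonicity. Recall that $c_{t+1}$ is the value of $c$ solving $F(c) := \frac1N\sum_{i=1}^N \exp(([\Rg{i}{t+1}]_+)^2/(2c)) = e$, and that $F$ is decreasing in $c>0$ (each summand $\exp(a^2/(2c))$ with $a\ge 0$ is non-increasing in $c$). Hence, since $F(c_{t+1})=e$, any $c^\star$ with $F(c^\star)\le e=F(c_{t+1})$ must satisfy $c^\star\ge c_{t+1}$. I would take $c^\star = 2c_t(1+\ln N)+3$ and reduce the whole lemma to verifying $F(c^\star)\le e$.

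To control $F(c^\star)$ I would first pass from the time-$(t+1)$ regrets to the time-$t$ regrets. Since all losses lie in an interval of length $1$, the instantaneous regret obeys $|\rg{i}{t+1}|\le 1$, so $\Rg{i}{t+1}=\Rg{i}{t}+\rg{i}{t+1}\le \Rg{i}{t}+1$ and therefore $[\Rg{i}{t+1}]_+\le [\Rg{i}{t}]_+ + 1$ for every $i$. Next I would invoke Lemma~\ref{lem:scale2regret}, which supplies a uniform ceiling $[\Rg{i}{t}]_+\le M:=\sqrt{2c_t(1+\ln N)}$. Combining the two gives $[\Rg{i}{t+1}]_+\le M+1$ for all actions simultaneously.

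The crux is then the elementary inequality $(M+1)^2\le 2c^\star$. Writing $v:=2c_t(1+\ln N)=M^2$, this reads $v+2\sqrt v+1\le 4c_t(1+\ln N)+6=2v+6$, i.e. $(\sqrt v-1)^2+4\ge 0$, which always holds; this is exactly where the additive constant $3$ earns its keep (in fact any constant $\ge 1$ would do). With $(M+1)^2\le 2c^\star$ in hand, every exponent in $F(c^\star)$ satisfies $([\Rg{i}{t+1}]_+)^2/(2c^\star)\le (M+1)^2/(2c^\star)\le 1$, so each summand is at most $e$ and hence $F(c^\star)\le e$. The monotonicity step of the first paragraph then yields $c_{t+1}\le c^\star=2c_t(1+\ln N)+3$, as claimed.

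The main obstacle is really a pitfall to avoid rather than a hard step: the natural first attempt is to argue term-by-term against the defining equation of $c_t$, i.e. to seek $([\Rg{i}{t+1}]_+)^2/(2c^\star)\le ([\Rg{i}{t}]_+)^2/(2c_t)$ for each $i$, which would give $F(c^\star)\le e$ directly. This comparison fails for actions with small regret, since the left side stays bounded away from $0$ while the right side vanishes. The fix is precisely the route above: abandon any comparison with the $c_t$-potential and instead use the uniform ceiling $M$ from Lemma~\ref{lem:scale2regret}, so that one crude bound covers all actions at once. (If at time $t+1$ no action has positive regret, then $F\equiv 1$ and the potential equation is degenerate; this boundary case is trivial, as $c_{t+1}$ can be taken arbitrarily small and the bound holds.)
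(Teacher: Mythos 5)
Your proof is correct and takes essentially the same route as the paper's: both arguments reduce the claim, via monotonicity of $c \mapsto \frac1N\sum_i \phi(\Rg{i}{t+1},c)$, to checking that the average potential at $c^\star = 2c_t(1+\ln N)+3$ is at most $e$, using $[\Rg{i}{t+1}]_+ \leq [\Rg{i}{t}]_+ + 1$ together with the uniform ceiling $[\Rg{i}{t}]_+ \leq \sqrt{2c_t(1+\ln N)}$ from Lemma~\ref{lem:scale2regret}. The only difference is in the arithmetic of the central estimate, and it is cosmetic: where the paper expands $\exp\left((\Rg{i}{t}+1)^2/(2c^\star)\right)$ into three factors and bounds each exponent separately (via $2a + 6/a \geq 4\sqrt{3}$, etc.), you verify the single quadratic inequality $(M+1)^2 \leq 2c^\star$, which caps every exponent at $1$ in one step --- a slightly cleaner computation (and your explicit remark that the potential average is \emph{decreasing} in $c$, so that $F(c^\star)\leq e$ forces $c_{t+1}\leq c^\star$, states the monotonicity in the correct direction, whereas the paper's closing sentence says ``increasing with $c$'', an apparent typo).
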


\begin{lemma}
Suppose that at some time $t_0$,  $c_{t_0} \geq \frac{4 \ln^2 N}{\delta} + \frac{16 \ln N}{\delta^3}$,
where $0 \leq \delta \leq
\frac{1}{2}$ is a constant. Then, for any
time $t \geq t_0$,
\[ c_{t+1} - c_t \leq \frac{3}{2} (1 + 49.19 \delta) \, .\]
\label{lem:scale2}
\end{lemma}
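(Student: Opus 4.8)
The plan is to control $c_{t+1}-c_t$ by tracking how the total potential $\sum_i \phi(\Rg{i}{t},c_t)$, pinned at $Ne$ by the defining equation~\eqref{eq:potential}, responds to the two updates occurring between rounds $t$ and $t+1$: the regret update $\Rg{i}{t+1}=\Rg{i}{t}+\rg{i}{t+1}$, which raises the potential since $\phi$ is convex in $x$, and the scale update $c_t\to c_{t+1}$, which lowers it since $\phi$ is decreasing in $c$. Writing the identity $\sum_i[\phi(\Rg{i}{t+1},c_{t+1})-\phi(\Rg{i}{t},c_t)]=0$ and splitting each summand at the intermediate point $(\Rg{i}{t+1},c_t)$, the \emph{increase} $G:=\sum_i[\phi(\Rg{i}{t+1},c_t)-\phi(\Rg{i}{t},c_t)]$ must exactly equal the \emph{decrease} $\sum_i[\phi(\Rg{i}{t+1},c_t)-\phi(\Rg{i}{t+1},c_{t+1})]$. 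Bounding $G$ from above and the decrease from below yields $c_{t+1}-c_t \le G/D$ for an appropriate rate $D$.

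For the increase, I would first exploit the defining property of the weights: since $\pr{i}{t+1}\propto \partial_x\phi(\Rg{i}{t},c_t)$ and $\sum_i \rg{i}{t+1}\pr{i}{t+1}=\ls{A}{t+1}-\ls{A}{t+1}=0$, the first-order term $\sum_i \rg{i}{t+1}\,\partial_x\phi(\Rg{i}{t},c_t)$ vanishes. Hence $G$ equals the sum of second-order Taylor remainders, which (using $|\rg{i}{t+1}|\le 1$ and the monotonicity of $\partial_{xx}\phi$ in $x$) is at most $\tfrac12\sum_i \partial_{xx}\phi([\Rg{i}{t}+1]_+,c_t)$. Setting $u_i:=([\Rg{i}{t}]_+)^2/(2c_t)$ and computing $\partial_{xx}\phi(x,c)=\tfrac1c(1+x^2/c)e^{x^2/2c}$, the leading part of this bound is $\tfrac{1}{2c_t}\sum_i (1+2u_i)e^{u_i}=\tfrac{1}{2c_t}\big(Ne+2\sum_i u_i e^{u_i}\big)$.

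For the decrease, convexity of $\phi$ in $c$ gives $\phi(\Rg{i}{t+1},c_t)-\phi(\Rg{i}{t+1},c_{t+1})\ge (c_{t+1}-c_t)\,(-\partial_c\phi(\Rg{i}{t+1},c_{t+1}))$, and since $-\partial_c\phi(x,c)=\tfrac{([x]_+)^2}{2c^2}e^{([x]_+)^2/2c}$, the rate is, to leading order, $D\approx \tfrac{1}{c_t}\sum_i u_i e^{u_i}$. Combining, $c_{t+1}-c_t \lesssim \frac{Ne+2\sum_i u_i e^{u_i}}{2\sum_i u_i e^{u_i}}=1+\frac{Ne}{2\sum_i u_i e^{u_i}}$. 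The crux is the clean inequality $\sum_i u_i e^{u_i}\ge Ne$: from the tangent-line bound $(u-1)e^u\ge e^u-e$ (valid for all $u$, as $(u-2)e^u+e$ has derivative $(u-1)e^u$ and thus a unique minimum $0$ at $u=1$) together with the constraint $\sum_i e^{u_i}=Ne$, we get $\sum_i(u_i-1)e^{u_i}\ge \sum_i e^{u_i}-Ne=0$. This forces $\frac{Ne}{2\sum_i u_i e^{u_i}}\le \tfrac12$, giving a leading bound of $\tfrac32$.

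The main obstacle is everything hidden in the $\lesssim$: the unit regret increments perturb $\phi,\partial_x\phi,\partial_{xx}\phi$ by multiplicative factors such as $e^{(2R+1)/2c_t}$ and shift the polynomial factor $1+x^2/c_t$, while the honest decrease rate is evaluated at $(\Rg{i}{t+1},c_{t+1})$ rather than $(\Rg{i}{t},c_t)$. This is exactly where the hypothesis $c_{t_0}\ge \tfrac{4\ln^2 N}{\delta}+\tfrac{16\ln N}{\delta^3}$ enters: combined with the a priori bound $[\Rg{i}{t}]_+\le\sqrt{2c_t(\ln N+1)}$ from Lemma~\ref{lem:scale2regret}, it makes $R/c_t$, $1/c_t$, and $(\ln N)/c_t$ all small, so every perturbation factor is $1+O(\delta)$ and the excess over $\tfrac32$ collects into the stated $\tfrac32\cdot 49.19\,\delta$. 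Carrying these constants through carefully, and checking that no remainder term ever reverses an inequality, is the delicate bookkeeping; the conceptual content lies entirely in the three leading-order estimates above and the tangent-line inequality.
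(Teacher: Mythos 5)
Your skeleton coincides with the paper's: the identity $\sum_i [\phi(\Rg{i}{t+1},c_{t+1}) - \phi(\Rg{i}{t},c_t)] = 0$ split at the intermediate point $(\Rg{i}{t+1},c_t)$, the first-order cancellation from $\pr{i}{t+1} \propto \partial_x \phi(\Rg{i}{t},c_t)$, the second-order Taylor remainder upstairs, and the convexity-in-$c$ rate downstairs are exactly the content of Lemma~\ref{lem:cdiff}; and your tangent-line inequality $(u-1)e^u \geq e^u - e$ is a correct and pleasantly elementary substitute for the paper's Lagrangian-duality Lemma~\ref{lem:lowerbound} (only the case $B=e$, i.e.\ Corollary~\ref{cor:lowerbound}, is ever used). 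The gap is the sentence claiming that the hypothesis on $c_{t_0}$ makes ``every perturbation factor $1+O(\delta)$'' --- this is false, and not as a matter of constants. The hypothesis makes $R/c_t$, $1/c_t$, $(\ln N)/c_t$ small, but the relative perturbation of the polynomial factor is $\bigl((R+1)/R\bigr)^2$, which is $1+O(\delta)$ only when $R \gtrsim 1/\delta$; worse, for an action with $\Rg{i}{t} = 1/2$ and $\Rg{i}{t+1} = -1/2$, the numerator term $\frac{\rhog{i}{t}^2}{2c_t^2}\exp(\rhog{i}{t}^2/2c_t)$ is strictly positive while the matching denominator term vanishes ($i \notin \eplustwo$), so the ratio is infinite. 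Hence your sum-versus-sum comparison of $\sum_i \rhog{i}{t}^2 e^{\rhog{i}{t}^2/2c_t}$ against the denominator evaluated at $(\Rg{i}{t+1},c_{t+1})$ cannot be established by uniform $1+O(\delta)$ factors. The paper's fix is a genuinely additional idea: split $\epp$ at the threshold $\sqrt{2c_t\delta}$ into $\sone$ and $\stwo$; on $\sone$ abandon pointwise matching entirely and bound the \emph{total} numerator contribution by $\delta e^{\delta} N$, an $O(\delta)$ additive term against the $Ne$ floor from Corollary~\ref{cor:lowerbound}, while on $\stwo$ one has $\Rg{i}{t+1} \gtrsim 1/\delta$, so the pointwise ratio argument is valid there.

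There is a second structural omission: a circularity that your ``careful bookkeeping'' cannot resolve on its own. Even on the large-regret actions, comparing $\exp(\rhog{i}{t}^2/2c_t)$ upstairs to $\exp((\Rg{i}{t+1})^2/2c_{t+1})$ downstairs leaves the transfer factor $\exp\bigl((\Rg{i}{t+1})^2(c_{t+1}-c_t)/(2c_tc_{t+1})\bigr)$, and since $(\Rg{i}{t+1})^2/c_{t+1}$ can be as large as $2(1+\ln N)$ by Lemma~\ref{lem:scale2regret}, taming this to $e^{O(\delta)}$ --- and likewise proving $c_{t+1}/c_t \leq 1+\delta$ --- requires an a priori bound $c_{t+1}-c_t = O(\ln N)$, i.e.\ a bound of the very type being proved. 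The paper breaks the circle with a bootstrap: Lemma~\ref{lem:scale} is proved first, under the weaker hypothesis $c_{t_0} \geq 16\ln N/\delta^2$, yielding the crude bound $c_{t+1}-c_t \leq e^{\delta}(4+2\ln N)$ of \eqref{eqn:deltabound}, which is then fed into the refined argument; the $4\ln^2 N/\delta$ portion of the hypothesis is consumed precisely in making the transfer factor at most $e^{\delta}$. Your proposal never mentions any such two-stage device. On the positive side, your leading-order arithmetic $1 + Ne/(2\sum_i u_i e^{u_i}) \leq 3/2$ is correct and mirrors the paper's $\tfrac12$ (first term) plus $1$ ($\stwo$ term); with the $\sone/\stwo$ split and the bootstrap added, your outline becomes essentially the paper's proof.
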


We now combine Lemmas~\ref{lem:lowt} and \ref{lem:scale2} together with
Lemma~\ref{lem:scale2regret} to prove the main theorem.
\begin{proof}[Proof of Theorem~\ref{thm:main}]
Let $t_0$ be the first time at which $c_{t_0} \geq \frac{4 \ln^2 N}{\delta}
+ \frac{16 \ln N}{\delta^3}$. Then, from Lemma \ref{lem:lowt},
\begin{eqnarray*}
c_{t_0} \leq 2 c_{t_0 - 1} (1 + \ln N) + 3,
\end{eqnarray*}
which is at most:
\begin{eqnarray*}
\frac{8 \ln^3 N}{\delta} + \frac{34 \ln^2 N}{\delta^3} + \frac{32
\ln N}{\delta^3} + 3 \leq \frac{8 \ln^3 N}{\delta} + \frac{81 \ln^2
N}{\delta^3}.
\end{eqnarray*}
The last inequality follows because $N \geq 2$ and $\delta \leq 1/2$.
By Lemma \ref{lem:scale2}, we have that for any $t \geq t_0$,
\[ c_t \leq \frac{3}{2}(1 + 49.19 \delta) (t - t_0) + c_{t_0}. \]
Combining these last two inequalities yields
\[ c_t \leq \frac{3}{2}(1 + 49.19 \delta)t + \frac{8 \ln^3 N}{\delta} + \frac{81
\ln^2 N}{\delta^3}. \]
Now the theorem follows by applying Lemma \ref{lem:scale2regret}.
\end{proof}

\section{Remaining proofs} \label{sec:proofs}

\subsection{Proof of Lemma~\ref{lem:lowt}}

\begin{proof}[Proof of Lemma \ref{lem:lowt}]
To show Lemma \ref{lem:lowt}, we first show that, for any $t$,
\begin{equation}
\frac{1}{N} \sum_i \phi(\Rg{i}{t+1}, 2c_t(1 + \ln N) + 3) \leq e.
\label{eqn:lowt}
\end{equation}
For any $i$, $\Rg{i}{t+1} \leq \Rg{i}{t} + 1$, so the left hand side of the
above inequality is at most
\begin{eqnarray*}
\frac{1}{N} \sum_i \exp\left( \frac{ (\Rg{i}{t} + 1)^2}{ 4c_t (1 +
\ln N) + 6} \right).
\end{eqnarray*}
This, in turn,  can be upper bounded by
\begin{equation} \label{eq:lowt-lhs}
\frac{1}{N} \sum_i \exp\left( \frac{\Rg{i}{t}^2}{4c_t(1 + \ln N) +
6}\right)
\cdot \exp \left( \frac{2\Rg{i}{t}}{4c_t(1 + \ln N) + 6}\right)
\cdot \exp \left( \frac{1}{4c_t(1 + \ln N) + 6}\right).
\nonumber
\end{equation}
We now bound each term in this summation. First, we note that using Lemma
\ref{lem:scale2regret}, the first term can be bounded as
\begin{eqnarray*}
\exp\left( \frac{\Rg{i}{t}^2}{4 c_t(1 + \ln N) +
6}\right) \leq \exp\left( \frac{2c_t(1 + \ln N)}{4c_t(1 + \ln N)} \right) \leq
e^{1/2}.
\end{eqnarray*}
The second term can be bounded as
\begin{eqnarray*}
\exp \left( \frac{\Rg{i}{t}}{4 c_t(1 + \ln N) + 6}\right) \leq \exp\left( \frac{ \sqrt{2c_t(1 + \ln N)}}{4c_t(1 + \ln N) + 6}
\right)
\leq
e^{\frac1{4\sqrt3}}
.
\end{eqnarray*}
The last inequality follows by noticing that
$2a + 6/a \geq 4\sqrt3$ for any $a \geq 0$, and in particular for $a =
\sqrt{2c_t(1+\ln N)}$.
Finally, the third term is trivially bounded by $e^{1/6}$.
Combining the bounds for the three terms in \eqref{eq:lowt-lhs} gives
\begin{eqnarray*}
\frac{1}{N} \sum_i \phi(\Rg{i}{t+1}, 2c_t(1 + \ln N) + 3) \leq e.
\end{eqnarray*}
Since the quantity $\sum_i \phi(\Rg{i}{t+1},c)$ is always increasing with
$c$, Equation~\eqref{eqn:lowt} implies that $c_{t+1} \leq 2c_t(1 + \ln N)
+ 3$. The lemma follows.
\end{proof}

\subsection{A bootstrap for Lemma~\ref{lem:scale2}}

Before we can prove Lemma~\ref{lem:scale2}, we first show a somewhat weaker
bound on the growth of $c_t$ with $t$ (Lemma~\ref{lem:scale}); this bound
is used in the proof of Lemma~\ref{lem:scale2} which concludes with the
tighter bound on $c_{t+1} - c_{t}$.

\begin{lemma}
Suppose that at some time $t_0$, $c_{t_0} \geq \frac{16 \ln N}{\delta^2}$,
where $0 \leq \delta \leq
1/2$ is a constant.
Then, for any time $t \geq t_0$, 
\[ c_{t+1} - c_t \leq \frac{e^{\delta}\left(\frac{3}{2} +
\delta + \ln
N\right)}{ 1 - \delta^2 e^\delta}. \]
\label{lem:scale}
\end{lemma}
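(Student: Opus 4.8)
The plan is to produce an explicit increment $\Delta$ and show that raising the scale from $c_t$ to $c_t+\Delta$ already drives the average potential evaluated at the \emph{new} regrets down to at most $e$. Since $\sum_i \phi(\Rg{i}{t+1},c)$ is strictly decreasing in $c$ and equals $Ne$ exactly at $c=c_{t+1}$ by the defining equation~\eqref{eq:potential}, the inequality $\frac{1}{N}\sum_{i=1}^N \phi(\Rg{i}{t+1},c_t+\Delta)\le e$ forces $c_{t+1}\le c_t+\Delta$ by monotonicity. So the whole lemma reduces to verifying this one inequality for $\Delta$ equal to the stated right-hand side.

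To analyze it I would write, for each action with $\Rg{i}{t}>0$, the exponent difference
\[ \frac{(\Rg{i}{t+1})^2}{2(c_t+\Delta)}-\frac{(\Rg{i}{t})^2}{2c_t} = -\frac{(\Rg{i}{t})^2\,\Delta}{2c_t(c_t+\Delta)}+\frac{\Rg{i}{t}\,\rg{i}{t+1}}{c_t+\Delta}+\frac{(\rg{i}{t+1})^2}{2(c_t+\Delta)}, \]
with the kink at $0$ and the sign cases $\Rg{i}{t+1}\le 0$ handled separately (moving a regret toward or below $0$ only lowers the potential). Abbreviating $\phi_i=\exp((\Rg{i}{t})^2/2c_t)$, the crucial point is that the cross term sums to zero: because $\pr{i}{t+1}\propto \frac{\Rg{i}{t}}{c_t}\phi_i$ and $\sum_i \pr{i}{t+1}\rg{i}{t+1}=\sum_i \pr{i}{t+1}(\ls{A}{t+1}-\ls{i}{t+1})=0$, we get $\sum_i \phi_i\,\Rg{i}{t}\,\rg{i}{t+1}=0$. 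Using $\exp(\mathrm{diff}_i)-1\le \mathrm{diff}_i+\tfrac{1}{2}\mathrm{diff}_i^2\,e^{|\mathrm{diff}_i|}$ and summing $\tfrac{1}{N}\sum_i\phi_i(\exp(\mathrm{diff}_i)-1)$, the first-order part collapses to a negative ``savings'' term $-\frac{\Delta M}{2c_t(c_t+\Delta)}$, where $M=\tfrac{1}{N}\sum_i (\Rg{i}{t})^2\phi_i$, plus a benign cost $\frac{1}{2(c_t+\Delta)}\cdot\tfrac{1}{N}\sum_i(\rg{i}{t+1})^2\phi_i\le \frac{e}{2(c_t+\Delta)}$. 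Crucially $M\ge 2c_t(e-1)$, which follows from $\tfrac1N\sum_i\phi_i=e$ together with $\phi_i-1\le \frac{(\Rg{i}{t})^2}{2c_t}\phi_i$; this is what converts the savings into a genuinely $\Delta$-proportional quantity, independent of the unknown regret profile.

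The heart of the estimate, and the main obstacle, is to control the remainder $\tfrac{1}{2N}\sum_i\phi_i\,\mathrm{diff}_i^2\,e^{|\mathrm{diff}_i|}$, since this is where both $\ln N$ and the self-referential denominator are born. I would bound $\mathrm{diff}_i^2$ by its individual squared pieces; the contributions involve higher moments of the regret distribution under the $\phi_i$-weighting, such as $\tfrac{1}{N}\sum_i(\Rg{i}{t})^4\phi_i$, which by the regret bound $\max_i \Rg{i}{t}\le\sqrt{2c_t(\ln N+1)}$ of Lemma~\ref{lem:scale2regret} is at most $2c_t(\ln N+1)\,M$, injecting the factor $\ln N$. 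The hypothesis $c_t\ge c_{t_0}\ge 16\ln N/\delta^2$ is then used twice: it bounds every exponential shift factor $e^{|\mathrm{diff}_i|}$ by $e^{\delta}$, and it forces the coefficient of the $\Delta$-proportional part of the remainder to be at most $\delta^2 e^\delta$. Assembling the savings, the benign cost, and this remainder, the target inequality reduces to a fixed-point condition of the shape $\Delta(1-\delta^2 e^\delta)\ge e^\delta(\tfrac{3}{2}+\delta+\ln N)$; taking $\Delta$ equal to the quotient gives exactly the claimed bound, and $\delta\le \tfrac12$ guarantees $\delta^2 e^\delta\le \tfrac14 e^{1/2}<1$, so the denominator is positive. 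The delicate part throughout is not the strategy but the constant bookkeeping: collecting all the scattered lower-order contributions (the kink, the sign cases, the $1/(2c_t)$ stragglers, and the various moment cross-terms) into the single clean factor $e^\delta(\tfrac32+\delta+\ln N)$ while tracking precisely which pieces carry the $\Delta$-dependence that produces $1-\delta^2 e^\delta$.
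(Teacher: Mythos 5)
Your reduction via monotonicity is sound (the map $c\mapsto\frac1N\sum_i\phi(\Rg{i}{t+1},c)$ is nonincreasing in $c$ and equals $e$ at $c_{t+1}$, so exhibiting a $\Delta$ with average potential at most $e$ at scale $c_t+\Delta$ does force $c_{t+1}\le c_t+\Delta$), and your cross-term cancellation $\sum_i\phi_i\,[\Rg{i}{t}]_+\,\rg{i}{t+1}=0$ is exactly the paper's key identity $\sum_i f'(\Rg{i}{t})\rg{i}{t+1}=0$. But the core estimate fails under this lemma's hypothesis $c_{t_0}\ge 16\ln N/\delta^2$. Two of your claims break for the high-regret actions. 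First, $e^{|\mathrm{diff}_i|}\le e^{\delta}$ is false: by Lemma~\ref{lem:scale2regret} an action can have $([\Rg{i}{t}]_+)^2/(2c_t)$ as large as about $1+\ln N$, so the scale-shift piece of your exponent difference is only bounded by
\[
\frac{([\Rg{i}{t}]_+)^2\,\Delta}{2c_t(c_t+\Delta)} \;\le\; \frac{(1+\ln N)\,\Delta}{c_t+\Delta} \;\approx\; \frac{\delta^2\,\Delta}{16},
\]
and with the target $\Delta=\Theta(\ln N)$ this is $\Theta(\delta^2\ln N)$, not $O(\delta)$; the factor $e^{|\mathrm{diff}_i|}$ can be as large as $N^{\Theta(\delta^2)}$. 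Second, and independently, the dominant piece of your second-order remainder is the square of the savings term, which carries the fourth moment $\frac1N\sum_i([\Rg{i}{t}]_+)^4\phi_i\le 2c_t(1+\ln N)\,M$; comparing it with the savings $S=\frac{\Delta M}{2c_t(c_t+\Delta)}$ gives a ratio of order $\frac{\Delta(1+\ln N)}{2(c_t+\Delta)}=\Theta(\delta^2\ln N)$, again unbounded in $N$. So your fixed-point condition comes out as $\Delta\bigl(1-\Theta(\delta^2\ln N)\bigr)\ge e^{\delta}(\tfrac32+\delta+\ln N)$, which is vacuous once $\ln N\gtrsim 1/\delta^2$; a profile with one action near the cap $([\Rg{i}{t}]_+)^2/(2c_t)\approx\ln N$ and the rest at zero is admissible under the potential equation~\eqref{eq:potential} and realizes this worst case. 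Making these terms $O(\delta)$ would require $c_{t_0}\gtrsim\ln^2 N/\delta$ --- but that is the hypothesis of Lemma~\ref{lem:scale2}, not of this lemma, and the paper can only afford that stronger phase because the present lemma (proved differently) supplies the bootstrap bound \eqref{eqn:deltabound}. Your argument therefore cannot serve at this stage of the bootstrap chain.

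The paper avoids the problem by never expanding the scale change exponentially. Lemma~\ref{lem:cdiff} splits the (zero) change of total potential into the regret change at fixed scale $c_t$ and the scale change at fixed regrets, and writes the latter \emph{exactly}, via the mean-value form of Taylor's theorem in $c$, as $-(c_{t+1}-c_t)\sum_{i\in\eplustwo}\frac{(\Rg{i}{t+1})^2}{2\eps_{t+1}^2}\exp\bigl((\Rg{i}{t+1})^2/(2\eps_{t+1})\bigr)$ with $\eps_{t+1}$ between $c_t$ and $c_{t+1}$. The numerator is then bounded entirely at scale $c_t$ (Lemma~\ref{lem:upperbound}), where the discrepancy exponents are \emph{linear} in the regret, $\exp([\Rg{i}{t}]_+/c_t)\le e^{\delta/2}$ --- genuinely $e^{O(\delta)}$ under $c_{t_0}\ge16\ln N/\delta^2$, unlike your cross-scale comparison of $R^2$ terms --- while the denominator is lower-bounded at scale $c_{t+1}$ by $Ne/c_{t+1}$ via the duality argument (Lemma~\ref{lem:lowerbound}, Corollary~\ref{cor:lowerbound}), a stronger moment bound than your $M\ge 2c_t(e-1)$. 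The self-reference enters only through the multiplicative ratio $c_{t+1}/c_t=1+(c_{t+1}-c_t)/c_t$, removed by algebraic rearrangement; since the resulting correction $K/c_{t_0}$ with $K=e^{\delta}(\tfrac32+\delta+\ln N)$ is linear in $\ln N$, it cancels against $c_{t_0}\sim\ln N/\delta^2$ to yield exactly the denominator $1-\delta^2e^{\delta}$. In your scheme the corresponding correction is quadratic in $\ln N$, and that cancellation is lost.
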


The main idea behind the proof of Lemma~\ref{lem:scale} is as follows.
First, we use Lemma~\ref{lem:cdiff} to show that $c_t$ is monotonic in $t$,
and to get an expression for $c_{t+1} - c_{t}$ as a ratio of some
derivatives and double derivatives of the potential function $\phi$.
Next, we use Lemma~\ref{lem:upperbound} and Corollary~\ref{cor:lowerbound}
to bound the numerator and denominator of this ratio. Combining these
bounds gives us a proof of Lemma~\ref{lem:scale}.

We denote by $\epp \doteq \eplus \cup \eplustwo$ the actions relevant to
the change of potential between iterations $t$ and $t+1$ (recall, $\eplus$
are the actions with non-zero weight on iteration $t$).
\begin{lemma}
At any time $t$, 
\[ c_{t+1} - c_t \geq 0. \]
Moreover, $c_{t+1} - c_{t}$ is at most:
\begin{equation*}
\frac{ \sum_{i \in \epp} \frac{(\rg{i}{t+1})^2}2
\left(
\frac{1}{c_t} + \frac{\rhog{i}{t}^2}{c_t^2} \right)
\exp\left(\frac{\rhog{i}{t}^2}{2c_t}\right) }
{ \sum_{i \in \eplustwo}
\frac{(\Rg{i}{t+1})^2}{\eps_{t+1}^2} \exp\left(\frac{(\Rg{i}{t+1})^2}{2 \eps_{t+1}}\right)} 
\label{eqn:cdiff}
\end{equation*}
where $\eps_{t+1}$ lies in between $c_t$ and $c_{t+1}$ and for each
$i$, $\rhog{i}{t}$ lies between $\Rg{i}{t}$ and $\Rg{i}{t+1}$.
\label{lem:cdiff}
\end{lemma}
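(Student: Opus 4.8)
The plan is to analyze the scale through the equation that defines it. For each round $s$ write $\Psi_s(c) = \sum_{i=1}^N \phi(\Rg{i}{s},c)$, so that $c_s$ is the unique solution of $\Psi_s(c_s) = Ne$; uniqueness and monotone invertibility come from the fact that $\phi(x,c)$ is strictly decreasing in $c$ (with $\partial_c\phi(x,c) = -(([x]_+)^2/2c^2)\exp(([x]_+)^2/2c)$), so $\Psi_s(\cdot)$ is strictly decreasing. The single algebraic fact I will lean on repeatedly is the orthogonality of weights and instantaneous regrets: since $\pr{i}{t+1}\propto\partial_x\phi(\Rg{i}{t},c_t)$ and $\sum_i\pr{i}{t+1}\rg{i}{t+1} = \ls{A}{t+1}-\sum_i\pr{i}{t+1}\ls{i}{t+1}=0$, I obtain $\sum_i \partial_x\phi(\Rg{i}{t},c_t)\,\rg{i}{t+1}=0$.

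First I would prove monotonicity $c_{t+1}\ge c_t$ by comparing $\Psi_{t+1}(c_t)$ against $Ne$. Convexity of $\phi$ in $x$ gives the termwise tangent bound $\phi(\Rg{i}{t+1},c_t)\ge \phi(\Rg{i}{t},c_t)+\partial_x\phi(\Rg{i}{t},c_t)\,\rg{i}{t+1}$; summing and invoking the orthogonality identity annihilates the linear term and yields $\Psi_{t+1}(c_t)\ge \Psi_t(c_t)=Ne=\Psi_{t+1}(c_{t+1})$. Because $\Psi_{t+1}$ is decreasing, this forces $c_t\le c_{t+1}$.

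For the ratio I would route the identity $\Psi_{t+1}(c_{t+1})=Ne=\Psi_t(c_t)$ through the intermediate quantity $\Psi_{t+1}(c_t)$, giving $\Psi_{t+1}(c_t)-\Psi_{t+1}(c_{t+1}) = \Psi_{t+1}(c_t)-\Psi_t(c_t)$. The left-hand side is a pure change of scale, so the mean value theorem supplies $\eps_{t+1}\in(c_t,c_{t+1})$ with the left-hand side equal to $(c_{t+1}-c_t)\sum_i(-\partial_c\phi(\Rg{i}{t+1},\eps_{t+1}))$, where only the actions of $\eplustwo$ (those with $\Rg{i}{t+1}>0$) survive; this becomes the denominator. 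The right-hand side is a pure change of regret at fixed scale $c_t$: expanding each summand to second order in $\rg{i}{t+1}$ about $\Rg{i}{t}$, the first-order terms again sum to zero by orthogonality, leaving the second-order remainders $\tfrac12\,\partial_x^2\phi(\rhog{i}{t},c_t)(\rg{i}{t+1})^2$ with $\rhog{i}{t}$ between $\Rg{i}{t}$ and $\Rg{i}{t+1}$, and only the actions in $\epp$ (those with positive regret at $t$ or at $t+1$) contribute; this becomes the numerator. Solving for $c_{t+1}-c_t$ produces the stated ratio.

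The hard part will be the second-order expansion, because $\phi$ is only $C^1$: its second $x$-derivative $\partial_x^2\phi(u,c)=(\tfrac1c+([u]_+)^2/c^2)\exp(([u]_+)^2/2c)$ jumps at $u=0$, so a naive Taylor-with-Lagrange-remainder is not licensed when $[\Rg{i}{t},\Rg{i}{t+1}]$ straddles the origin. I would instead use the integral form of the remainder (valid since $\partial_x\phi$ is absolutely continuous) together with the fact that $\partial_x^2\phi(\cdot,c_t)$ is nondecreasing; bounding the integrand by its value at the endpoint of larger regret yields an upper bound of exactly the advertised shape, with $\rhog{i}{t}$ an intermediate point. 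The remaining steps — verifying that the two mean-value points land in the claimed intervals and tidying the constant factors — are routine.
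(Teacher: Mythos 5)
Your proof is correct and follows essentially the same route as the paper's: the identical decomposition of the conserved total potential into a regret-change term at fixed scale $c_t$ (whose first-order part is annihilated because the weights are proportional to $\partial_x \phi(\Rg{i}{t}, c_t)$) and a scale-change term handled by the mean value theorem in $c$, giving both $c_{t+1} \geq c_t$ and the stated ratio, with the sums correctly restricted to $\epp$ and $\eplustwo$. Your only deviation is in handling the kink at $x=0$ --- where the paper proves the bespoke Taylor bound of Lemma~\ref{lem:taylor} by a case analysis on whether $[\Rg{i}{t},\Rg{i}{t+1}]$ straddles the origin (exploiting $\psi'(0)=0$), you use the integral form of the remainder together with the monotonicity of $\partial_x^2\phi(\cdot,c_t)$, which delivers the same intermediate point $\rhog{i}{t}$ between $\Rg{i}{t}$ and $\Rg{i}{t+1}$; this is a cosmetic variant of the same step, and note that your derivation, exactly like the paper's own proof and its later application in Lemma~\ref{lem:scale}, actually produces $2\eps_{t+1}^2$ in the denominator, so the missing factor of $2$ in the lemma's displayed ratio is a typo in the statement rather than a gap in your argument.
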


\begin{proof}
We consider the change in average potential due to the regrets changing
from $\Rg{i}{t}$ to $\Rg{i}{t+1}$ (with the scale fixed at $c_t$), and then
the change due to the scale changing from $c_t$ to $c_{t+1}$:
\begin{eqnarray}
0
& = & \sum_{i=1}^N \phi(\Rg{i}{t+1},c_{t+1})
- \sum_{i=1}^N \phi(\Rg{i}{t},c_t)
\nonumber \\
& = & \sum_{i=1}^N
\phi(\Rg{i}{t+1},c_t) - \phi(\Rg{i}{t},c_t)
\label{eq:regret-change} \\
& & + \sum_{i=1}^N
\phi(\Rg{i}{t+1},c_{t+1}) - \phi(\Rg{i}{t+1},c_t).
\label{eq:scale-change}
\end{eqnarray}
It is clear that the sum in \eqref{eq:regret-change} can be restricted to
$i \in \epp$, and that the sum in \eqref{eq:scale-change}
can be restricted to $i \in \eplustwo$.
We now will express \eqref{eq:scale-change} in terms of $c_{t+1} - c_t$ and
employ upper and lower bounds on \eqref{eq:regret-change}.

First, we derive bounds on \eqref{eq:regret-change}.
Let $\psi(x) = \exp(x^2/(2c_t))$.
Then $f(x) = \phi(x,c_t)$ can be written as
\begin{equation}
f(x) \ = \
\left\{ \begin{array}{ll}
\psi(x) & \text{if} \ x \geq 0 \\
\psi(0) & \text{if} \ x < 0.
\end{array} \right.
\nonumber
\end{equation}
The function $f$ satisfies the preconditions of Lemma~\ref{lem:taylor}
(deferred to the end of the section), so
we have
\begin{equation}
f(\Rg{i}{t+1}) - f(\Rg{i}{t})
\ \geq \
f'(\Rg{i}{t}) \rg{i}{t+1}
\label{eq:numerator-lb}
\end{equation}
and
\begin{equation}
f(\Rg{i}{t+1}) - f(\Rg{i}{t})
\ \leq \ f'(\Rg{i}{t}) \rg{i}{t+1} +
\frac{\psi''(\rhog{i}{t})}{2} \rg{i}{t+1}^2
\label{eq:numerator-ub}
\end{equation}
where $\min\{\Rg{i}{t},\Rg{i}{t+1}\} \leq \rhog{i}{t} \leq
\max\{\Rg{i}{t},\Rg{i}{t+1}\}$ and $\rg{i}{t+1} = \Rg{i}{t+1} - \Rg{i}{t}$.
Now we sum both the lower and upper bounds (Eqs.~\eqref{eq:numerator-lb} and
\eqref{eq:numerator-ub}) over $i \in \epp$ and apply the fact
\begin{equation}
\sum_{i \in \epp} f'(\Rg{i}{t}) \rg{i}{t+1}
\ = \ \sum_{i=1}^N f'(\Rg{i}{t}) \rg{i}{t+1}
\ = \ 0
\nonumber
\end{equation}
which follows easily from the fact that the weight assigned to an action
$i$ in trial $t+1$ is proportional to $f'(\Rg{i}{t})$.
Thus,
\begin{eqnarray}
0 & \leq &
\sum_{i \in \epp}
\phi(\Rg{i}{t+1},c_t) - \phi(\Rg{i}{t},c_t)
\label{eq:regret-change-lb}
\\
& \leq &
\sum_{i \in \epp} \left(
\frac{1}{c_t} + \frac{\rhog{i}{t}^2}{c_t^2} \right)
\exp\left(\frac{\rhog{i}{t}^2}{2c_t}\right)
\cdot (\rg{i}{t+1})^2.
\label{eq:regret-change-ub}
\end{eqnarray}

To deal with \eqref{eq:scale-change}, we view it as a function of $c_{t+1}$
and then equated via Taylor's theorem to a first-order expansion around
$c_t$
\begin{equation*}
- (c_{t+1} - c_t) \cdot \sum_{i \in \eplustwo}
\frac{(\Rg{i}{t+1})^2}{2\eps_{t+1}^2}
\exp\left(\frac{(\Rg{i}{t+1})^2}{2\eps_{t+1}}\right)
\end{equation*}
for some $\eps_{t+1}$ between $c_t$ and $c_{t+1}$.
Substituting this back into \eqref{eq:scale-change}, we have
\begin{equation}
\sum_{i \in \epp}
\phi(\Rg{i}{t+1},c_t) - \phi(\Rg{i}{t},c_t)
\ = \ (c_{t+1} - c_t) \cdot \sum_{i \in \eplustwo}
\frac{(\Rg{i}{t+1})^2}{2\eps_{t+1}^2}
\exp\left(\frac{(\Rg{i}{t+1})^2}{2\eps_{t+1}}\right)
.
\label{eq:scale-change-taylor}
\end{equation}
The summation on the right-hand side is non-negative,
as is the summation on the left-hand side (recall the lower bound
\eqref{eq:regret-change-lb}), so $c_{t+1} - c_t$ is non-negative as well.
This shows the first part of the lemma.
The second part follows from re-arranging
Eq.~\eqref{eq:scale-change-taylor} and applying the upper bound
\eqref{eq:regret-change-ub}.
\end{proof}

\begin{lemma}
Let, for some $t=t_0$, $c_{t_0} \geq \frac{16\ln N}{\delta^2} +
\frac{1}{\delta}$, for some $0 \leq \delta \leq 1$. Then, for any $t \geq
t_0$,
\begin{equation*}
\sum_{i \in \epp} \exp\left(\frac{\rhog{i}{t}^2}{2 c_t}\right) \leq
e^{\delta} N e
\end{equation*}
and also
\begin{equation*}
\sum_{i \in \epp} \frac{\rhog{i}{t}^2}{2c_t}
\exp\left(\frac{\rhog{i}{t}^2}{2 c_t}\right)
\ \leq \ e^{\delta}
(\delta + 1 + \ln N) N e
\end{equation*}
where the $\rhog{i}{t}$ are the values introduced in Lemma~\ref{lem:cdiff}.
\label{lem:upperbound}
\end{lemma}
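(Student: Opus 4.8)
The plan is to reduce both inequalities to a single pointwise estimate: for every $i \in \epp$ and every $t \geq t_0$,
\[ \frac{\rhog{i}{t}^2}{2c_t} \ \leq \ \frac{([\Rg{i}{t}]_+)^2}{2c_t} + \delta. \]
Granting this, the first bound is immediate. Exponentiating gives $\exp(\rhog{i}{t}^2/(2c_t)) \leq e^\delta \exp(([\Rg{i}{t}]_+)^2/(2c_t))$ for each $i \in \epp$, and since $\epp \subseteq \{1,\dots,N\}$ and every summand is nonnegative, summing and invoking the defining normalization $\sum_{i=1}^N \exp(([\Rg{i}{t}]_+)^2/(2c_t)) = Ne$ yields $\sum_{i \in \epp}\exp(\rhog{i}{t}^2/(2c_t)) \leq e^\delta Ne$, which is the first claim.

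To establish the pointwise estimate I would first control how far $\rhog{i}{t}$ can drift from $\Rg{i}{t}$. Since $\rhog{i}{t}$ lies between $\Rg{i}{t}$ and $\Rg{i}{t+1}$ and the instantaneous regret satisfies $|\rg{i}{t+1}| = |\Rg{i}{t+1} - \Rg{i}{t}| \leq 1$ (the losses occupy a unit-length interval), we have $|\rhog{i}{t} - \Rg{i}{t}| \leq 1$. A short case analysis on the sign of $\Rg{i}{t}$ then yields $\rhog{i}{t}^2 \leq ([\Rg{i}{t}]_+)^2 + 2[\Rg{i}{t}]_+ + 1$ for every $i \in \epp$: when $\Rg{i}{t} > 0$ this is just $(\Rg{i}{t}+1)^2$ expanded, and when $\Rg{i}{t} \leq 0$ membership in $\epp$ forces $i \in \eplustwo$, hence $\Rg{i}{t+1} > 0$, so that $\rhog{i}{t} \in (-1,1]$ and $\rhog{i}{t}^2 \leq 1$. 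Dividing by $2c_t$, it remains to show that the excess $\frac{2[\Rg{i}{t}]_+ + 1}{2c_t}$ is at most $\delta$. Here I would use Lemma~\ref{lem:scale2regret} to bound $[\Rg{i}{t}]_+ \leq \sqrt{2c_t(\ln N + 1)}$, together with the monotonicity $c_t \geq c_{t_0}$ for $t \geq t_0$ (the first part of Lemma~\ref{lem:cdiff}) and the hypothesis $c_{t_0} \geq \frac{16\ln N}{\delta^2} + \frac{1}{\delta}$.

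The second bound follows from the first with essentially no extra work, via one more application of Lemma~\ref{lem:scale2regret}. The same chain of inequalities gives $\frac{\rhog{i}{t}^2}{2c_t} \leq \frac{([\Rg{i}{t}]_+)^2}{2c_t} + \delta \leq (\ln N + 1) + \delta$, so the scalar prefactor $\rhog{i}{t}^2/(2c_t)$ appearing in the second sum is uniformly bounded by $\delta + 1 + \ln N$. Pulling this constant out of the sum and applying the already-proved first bound to what remains gives exactly $e^\delta(\delta + 1 + \ln N)Ne$.

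The main obstacle is the constant-chasing in the last step of the pointwise estimate. After substituting $[\Rg{i}{t}]_+ \leq \sqrt{2c_t(\ln N+1)}$, the requirement $\frac{2[\Rg{i}{t}]_+ + 1}{2c_t} \leq \delta$ becomes $2\sqrt{2c_t(\ln N+1)} + 1 \leq 2\delta c_t$; the threshold on $c_{t_0}$ is engineered so that the $\frac{16\ln N}{\delta^2}$ term absorbs the square-root contribution while the $\frac{1}{\delta}$ term absorbs the additive $1$. The delicate point is verifying that this holds uniformly for all admissible $N \geq 2$ and $\delta \in (0,1]$: because Lemma~\ref{lem:scale2regret} contributes $\ln N + 1$ rather than $\ln N$, the margin is thin, so I would check the combined inequality as a single quadratic in $\sqrt{c_t}$ (confirming its left side is increasing past $c_{t_0}$ and nonnegative at $c_{t_0}$) rather than splitting the square-root and additive terms crudely, which would fail near $N = 2$.
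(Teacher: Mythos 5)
Your proposal is correct and matches the paper's proof in all essentials: the same case analysis on the sign of $\Rg{i}{t}$ (using $i \in \epp$, hence $\Rg{i}{t+1} \geq 0$ in the negative case, together with $|\rg{i}{t+1}| \leq 1$) yields $\rhog{i}{t}^2 \leq ([\Rg{i}{t}]_+ + 1)^2$, the cross terms $\exp\left((2[\Rg{i}{t}]_+ + 1)/(2c_t)\right)$ are absorbed into $e^{\delta}$ via Lemma~\ref{lem:scale2regret} and the floor on $c_{t_0}$, and the normalization \eqref{eq:potential} then gives the first claim. Your only deviation is cosmetic: for the second claim you bound the prefactor $\rhog{i}{t}^2/(2c_t) \leq \delta + 1 + \ln N$ by reapplying Lemma~\ref{lem:scale2regret} directly, whereas the paper extracts the identical constant from the first claim via $\max_{i'} \rhog{i'}{t}^2/(2c_t) \leq \ln \sum_{i'} \exp(\rhog{i'}{t}^2/(2c_t))$; and your plan to verify the absorption step as a single quadratic in $\sqrt{c_t}$ is sound --- indeed slightly more careful near $N=2$ than the paper's factor-by-factor split, which as written leans on $\sqrt{2(1+\ln N)} \leq 2\sqrt{\ln N}$ and needs the $1/\delta$ slack to hold there.
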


\begin{proof}
Pick any $i \in \epp$.
If $\Rg{i}{t} \geq 0$, then $|\rhog{i}{t}| \leq \Rg{i}{t} 
+ 1 = [\Rg{i}{t}]_+ + 1$.
Otherwise $\Rg{i}{t+1} \geq 0 > \Rg{i}{t}$.
But since $|\rg{i}{t+1}| = |\Rg{i}{t+1} - \Rg{i}{t}| \leq 1$, it must be
that $|\rhog{i}{t}| \leq 1 = [\Rg{i}{t}]_+ + 1$.
Therefore
\begin{equation*}
\rhog{i}{t}^2 \ \leq \ ([\Rg{i}{t}]_+ + 1)^2,
\end{equation*}
which in turn implies
\begin{equation*}
\exp\left(\frac{\rhog{i}{t}^2}{2 c_t}\right)
\ \leq \
\exp\left(\frac{([\Rg{i}{t}]_+ + 1)^2}{2 c_t}\right)
\ = \
\exp\left(\frac{([\Rg{i}{t}]_+)^2}{2c_t}\right)
\cdot \exp\left(\frac{2[\Rg{i}{t}]_+}{2c_t}\right)
\cdot \exp\left(\frac{1}{2c_t}\right).
\end{equation*}
To prove the first claim, it suffices to show that each
of the two exponentials in the final product is bounded by $e^{\delta/2}$.
Since $c_t \geq c_{t_0} = (16 \ln N)/\delta^2 + 1/\delta$, we have
$\exp(1/(2c_t)) \leq e^{\delta/2}$.
Also, Lemma \ref{lem:scale2regret} imply
\begin{equation*}
\exp\left(\frac{[\Rg{i}{t}]_+}{c_t}\right)
\leq \exp\left( \frac{\sqrt{2 (1 + \ln N)}}{\sqrt{c_t}} \right)
\ \leq \
\exp\left( \frac{2 \delta \sqrt{\ln N}}{ 4 \sqrt{\ln N} } \right) \leq e^{\delta/2},
\end{equation*}
so the first claim follows.

To prove the second claim, we use the first claim to derive the fact
\begin{equation*}
\max_{i' \in \epp}
\frac{\rhog{i'}{t}^2}{2c_t}
\ \leq \
\ln \sum_{i' \in \epp}
\exp\left(\frac{\rhog{i'}{t}^2}{2c_t}\right)
\ \leq \ \delta + 1 + \ln N
\end{equation*}
which in turn is combined again with the first claim to arrive at
\begin{equation*}
\sum_{i \in\epp} \frac{\rhog{i}{t}^2}{2c_t}
\exp\left(\frac{\rhog{i}{t}^2}{2 c_t}\right) \ \leq \
\left( \max_{i' \in \epp}
\frac{\rhog{i'}{t}^2}{2c_t} \right)
\sum_{i \in\epp}
\exp\left(\frac{\rhog{i}{t}^2}{2 c_t}\right) \ \leq \
(\delta + 1 + \ln N) e^\delta N e,
\end{equation*}
completing the proof.
\end{proof}

\begin{lemma}
Let $B \geq 1$.
If $\sum_{i=1}^N e^{x_i} \geq BN$ for some $x \geq 0$,
then $\sum_{i=1}^N x_i e^{x_i} \geq BN \ln B$.
\label{lem:lowerbound}
\end{lemma}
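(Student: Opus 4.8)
The plan is to reduce the claim to a convexity argument via the substitution $y_i = e^{x_i}$. Since each $x_i \geq 0$, we have $y_i \geq 1$, and the hypothesis $\sum_{i=1}^N e^{x_i} \geq BN$ becomes $\sum_{i=1}^N y_i \geq BN$. Writing $x_i = \ln y_i$, the quantity we must lower-bound transforms into $\sum_{i=1}^N x_i e^{x_i} = \sum_{i=1}^N y_i \ln y_i = \sum_{i=1}^N g(y_i)$, where $g(y) = y \ln y$. So it suffices to show that $\sum_{i=1}^N g(y_i) \geq BN \ln B$ whenever $y_i \geq 1$ for all $i$ and $\sum_{i=1}^N y_i \geq BN$.

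The key observation is that $g(y) = y \ln y$ is convex on $(0,\infty)$, since $g''(y) = 1/y > 0$. Applying Jensen's inequality to the $N$ points $y_1, \ldots, y_N$ gives $\frac{1}{N} \sum_{i=1}^N g(y_i) \geq g(\bar y)$, where $\bar y = \frac{1}{N}\sum_{i=1}^N y_i$ denotes the average; equivalently $\sum_{i=1}^N g(y_i) \geq N\, g(\bar y)$. It then remains only to bound $N\,g(\bar y)$ from below in terms of $B$.

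For this I would invoke monotonicity: $g'(y) = \ln y + 1 \geq 1 > 0$ for every $y \geq 1$, so $g$ is increasing on $[1,\infty)$. The constraint $\sum_i y_i \geq BN$ yields $\bar y \geq B \geq 1$, so $g(\bar y) \geq g(B) = B \ln B$. Chaining the two bounds gives $\sum_{i=1}^N x_i e^{x_i} = \sum_{i=1}^N g(y_i) \geq N\,g(\bar y) \geq N B \ln B = BN \ln B$, which is exactly the claim.

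I do not anticipate a genuine obstacle here; the one point requiring care is the direction of the monotonicity step $g(\bar y) \geq g(B)$, which needs $g$ to be increasing precisely on the range where $\bar y$ and $B$ live. This is guaranteed by the two hypotheses working together: $B \geq 1$ and $\sum_i y_i \geq BN$ force $\bar y \geq B \geq 1$, and the assumption $x \geq 0$ (i.e.\ $y_i \geq 1$) is what places us on the increasing branch of $g$. Without the sign constraint on $x$ the averaged point could fall below $1/e$, where $g$ is decreasing, and the final inequality would no longer be valid.
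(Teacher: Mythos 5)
Your proof is correct, but it takes a genuinely different route from the paper. The paper proves Lemma~\ref{lem:lowerbound} by Lagrangian duality: it minimizes $f(x) = \sum_i x_i e^{x_i}$ subject to $\sum_i e^{x_i} \geq BN$, finds the stationary point $x_i = \lambda - 1$ of the Lagrangian, maximizes the dual at $\lambda = 1 + \ln B$, and invokes weak duality to conclude $f(x) \geq BN \ln B$. You instead substitute $y_i = e^{x_i}$, recognize the objective as $\sum_i y_i \ln y_i$ for the convex function $g(y) = y \ln y$, apply Jensen's inequality to pass to the average $\bar y$, and finish with monotonicity of $g$ on $[1,\infty)$ since $\bar y \geq B \geq 1$. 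Your argument is more elementary (no duality machinery, no verification that the stationary point is a global minimizer of the Lagrangian, which the paper's proof leaves implicit) and it makes the extremal configuration transparent: equality holds when all $y_i$ are equal to $B$. One small inaccuracy in your closing remark: your proof never actually uses the pointwise condition $y_i \geq 1$ (i.e.\ $x \geq 0$) --- Jensen's inequality needs only $y_i > 0$, and the hypothesis $\sum_i y_i \geq BN$ with $B \geq 1$ already forces $\bar y \geq B \geq 1$, which is all the monotonicity step requires. So your argument in fact establishes the lemma for arbitrary real $x$, a slight strengthening; the worry that $\bar y$ could fall below $1/e$ without the sign constraint is unfounded, since the constraint on the sum pins $\bar y \geq B$ regardless. (The paper's duality proof likewise does not use $x \geq 0$.)
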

\begin{proof}
We consider minimizing $f(x) = \sum_{i=1}^N x_i e^{x_i}$ under the constraint
$\sum_{i=1}^N e^{x_i} \geq BN$.
Define the Lagrangian function
$L(x,\lambda) = \sum_{i=1}^N x_i e^{x_i} +
\lambda (BN - \sum_{i=1}^N e^{x_i})$.
Then $(\partial/\partial x_i) L(x,\lambda) = (x_i + 1 - \lambda) e^{x_i}$,
which is $0$ when $x_i = \lambda - 1$.
Let $g(\lambda) = L(x^*, \lambda)$ be the dual function, where $x_i^* =
\lambda - 1$.
Then $g$ is maximized when $\lambda = 1 + \ln B$.
By weak duality, $\sup_\lambda g(\lambda) \leq \inf_x f(x)$ (with the
constraints on $x$), so $f(x) \geq g(1+\ln B) = BN \ln B$.
\end{proof}
The lemma above leads to the following corollary.
\begin{corollary}
\label{cor:lowerbound}
For any $t$,
\[ \sum_{i \in \eplustwo} \frac{(\Rg{i}{t+1})^2}{2 c_{t+1}}
\exp\left( \frac{(\Rg{i}{t+1})^2}{2 c_{t+1}} \right) \geq Ne. \]
\end{corollary}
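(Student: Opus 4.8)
The plan is to apply Lemma~\ref{lem:lowerbound} to the nonnegative quantities $x_i := ([\Rg{i}{t+1}]_+)^2/(2 c_{t+1})$, with the free constant $B$ chosen to be exactly $e$. The key observation is that the defining equation for the scale $c_{t+1}$ (Equation~\eqref{eq:potential} instantiated at time $t+1$) asserts that the average potential is pinned at $e$, that is
\[ \sum_{i=1}^N \exp\left( \frac{([\Rg{i}{t+1}]_+)^2}{2 c_{t+1}} \right) \ = \ Ne. \]
In the notation $x_i$, this reads $\sum_{i=1}^N e^{x_i} = Ne = eN$, so the hypothesis of Lemma~\ref{lem:lowerbound} is met with $B = e$ (indeed $x_i \geq 0$ and $B = e \geq 1$). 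The lemma then yields $\sum_{i=1}^N x_i e^{x_i} \geq BN \ln B = Ne \ln e = Ne$; the choice $B = e$ is precisely what makes the logarithmic factor equal to $1$ and hence produces the clean right-hand side $Ne$.

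It then remains only to match the left-hand side of this inequality to the sum in the corollary, whose index ranges over $i \in \eplustwo$. For any action $i \notin \eplustwo$ one has $\Rg{i}{t+1} \leq 0$, so $[\Rg{i}{t+1}]_+ = 0$ and thus $x_i = 0$, whence $x_i e^{x_i} = 0$; such terms contribute nothing to $\sum_{i=1}^N x_i e^{x_i}$. For $i \in \eplustwo$ one has $[\Rg{i}{t+1}]_+ = \Rg{i}{t+1}$, so $x_i = (\Rg{i}{t+1})^2/(2 c_{t+1})$. Consequently $\sum_{i=1}^N x_i e^{x_i} = \sum_{i \in \eplustwo} \frac{(\Rg{i}{t+1})^2}{2 c_{t+1}} \exp\left(\frac{(\Rg{i}{t+1})^2}{2 c_{t+1}}\right)$, and combining this identity with the displayed lower bound completes the proof.

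I do not expect any real obstacle here: the argument is essentially bookkeeping once one spots the correct substitution and the exact value $B = e$ forced by the potential normalization. The only point requiring a moment's care is the restriction of the summation to $\eplustwo$, which is harmless precisely because the omitted terms vanish identically.
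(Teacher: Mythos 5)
Your proof is correct and follows essentially the same route as the paper's: both set $x_i = ([\Rg{i}{t+1}]_+)^2/(2c_{t+1})$, invoke the potential normalization~\eqref{eq:potential} to get $\sum_i e^{x_i} = Ne$, apply Lemma~\ref{lem:lowerbound} with $B = e$, and note that terms with $i \notin \eplustwo$ vanish since $[\Rg{i}{t+1}]_+ = 0$ there. You simply spell out the bookkeeping (the choice $B=e$ and the vanishing of omitted terms) more explicitly than the paper does.
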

\begin{proof}
Let $x_i = ([\Rg{i}{t+1}]_+)^2 / (2c_{t+1})$, so we have $\sum_{i=1}^N
\exp(x_i) = Ne$ by Equation~\eqref{eq:potential}.
Now Lemma \ref{lem:lowerbound} implies $\sum_{i=1}^N x_i \exp(x_i) \geq
Ne$.
The corollary follows since $[\Rg{i}{t+1}]_+ = 0$ whenever $i \not\in
E_{t+1}$.
\end{proof}

Now we prove Lemma~\ref{lem:scale}.
\begin{proof}[Proof of Lemma \ref{lem:scale}]
By Lemma~\ref{lem:cdiff} and the fact $|\rg{i}{t+1}| \leq 1$, we have
\begin{equation*}
0 \ \leq \ c_{t+1} - c_t
\ \leq \ \frac{ \sum_{i
\in \epp} (\frac{1}{2 c_t} + \frac{\rhog{i}{t}^2}{2 c_t^2}) \exp\left( \frac{\rhog{i}{t}^2}{2 c_t} \right) } {
\sum_{i \in \eplustwo} \frac{(\Rg{i}{t+1})^2}{2 c_{t+1}^2} \exp\left(
\frac{ (\Rg{i}{t+1})^2}{2 c_{t+1}} \right)}.
\end{equation*}
Combining this with Lemma~\ref{lem:upperbound}, Corollary~\ref{cor:lowerbound}, we have
\begin{equation*}
c_{t+1} - c_{t} \ \leq \
\frac{c_{t+1}}{c_t} \cdot \frac{ e^{\delta} \left(\frac{1}{2} + \delta + 1 + \ln
N\right) N e}{N e} \ = \ \left(1 + \frac{c_{t+1} - c_{t}}{c_t}\right) \cdot
e^{\delta} \left(1 + \frac{1}{2} + \delta + \ln N\right).
\end{equation*}
Re-arranging, and using the fact that $c_t \geq c_{t_0} \geq (16 \ln
N)/\delta^2$, we get that, 
\begin{equation*}
c_{t+1} - c_{t}
\ \leq \ \frac{ e^{\delta}\left(1 + \frac{1}{2} + \delta
+ \ln N\right)}{1 - \frac{e^{\delta} \left( 1 + \frac{1}{2} +
\delta + \ln N \right)}{c_{t_0}}}
\ \leq \ \frac{ e^{\delta}(\frac{3}{2} + \delta + \ln N)}{1 - \delta^2 e^{\delta}(\frac{1}{16 \ln N} +
\frac{\frac{1}{2} +
\delta}{16 \ln N} +
\frac{1}{16})}.
\end{equation*}
The rest of the lemma follows by plugging in the fact that $N \geq 2$, and
$\delta \leq 1/2$.
\end{proof}

\subsection{Proof of Lemma~\ref{lem:scale2}}

Finally, we are ready to prove Lemma~\ref{lem:scale2}. As in the proof of
Lemma~\ref{lem:scale}, here too, we start with an upper bound on $c_{t+1} -
c_{t}$, obtained from Lemma~\ref{lem:cdiff}. We then use this upper bound,
and the bound in Lemma~\ref{lem:scale} to get a finer bound on the quantity
$c_{t+1} - c_{t}$.

\begin{proof}[Proof of Lemma \ref{lem:scale2}]
We divide the actions into two sets:
\begin{eqnarray*}
\sone & = & \{ i \in \epp : [\Rg{i}{t+1} ]_+ + 1 \leq
\sqrt{2 c_t \delta} \} \\
\stwo & = & \{ i \in \epp : [\Rg{i}{t+1}]_+ + 1 >
\sqrt{2 c_t \delta} \}.
\end{eqnarray*}
Using the fact that $|\rg{i}{t+1}| \leq 1$,
the bound from Lemma~\ref{lem:cdiff} can be written as
\begin{align*}
c_{t+1} - c_{t}
& \leq \frac{ \frac{1}{2c_t} \sum_{i \in \epp}
\exp\left( \frac{\rhog{i}{t}^2}{2 c_t} \right)}{\sum_{i
\in \eplustwo} \frac{(\Rg{i}{t+1})^2}{2 c_{t+1}^2} \exp\left( \frac{
(\Rg{i}{t+1})^2}{2 c_{t+1}} \right)} \\
& \quad {} + 
\frac{ \sum_{i \in \sone} \frac{\rhog{i}{t}^2}{2c_t^2} \exp\left( \frac{\rhog{i}{t}^2}{2c_t^2} \right)
}{\sum_{i \in \eplustwo} \frac{\Rg{i}{t+1}^2}{2 c_{t+1}^2}
\exp\left(\frac{([\Rg{i}{t+1}]_+)^2}{2 c_{t+1}^2}\right)} \\
& \quad {} + \frac{\sum_{i \in \stwo}\frac{\rhog{i}{t}^2}{2c_t^2} \exp\left(
\frac{\rhog{i}{t}^2}{2c_t} \right)  }{\sum_{i \in
\eplustwo}\frac{(\Rg{i}{t+1})^2}{2 c_{t+1}^2}
\exp\left(\frac{(\Rg{i}{t+1})^2}{2 c_{t+1}}\right) }.
\end{align*}
We will upper-bound each of these three terms separately.

The first term is bounded by $(c_{t+1}/c_t) e^\delta/2$ using
Lemma~\ref{lem:upperbound} and Corollary \ref{cor:lowerbound}.

To bound the second term, the definition of $\sone$ implies
\begin{equation*}
\sum_{i \in \sone} \frac{([\Rg{i}{t+1}]_+ + 1)^2}{2 c_t}
\exp\left(\frac{([\Rg{i}{t+1}]_+ + 1)^2}{2 c_t} \right) \\
\ \leq \ N \frac{2 c_t \delta}{2 c_t} \exp\left(\frac{2 c_t \delta}{2
c_t}\right) \leq \delta e^{\delta} N.
\end{equation*}
Now Corollary \ref{cor:lowerbound} implies a bound of
$(c_{t+1}/c_t) (\delta e^\delta / e)$.

Now we bound the third term.
Note that since $\sqrt{2c_t\delta} \geq \sqrt{2c_{t_0}\delta} > 1$, we have
that each $i \in \stwo$ is also in $\eplustwo$.
So the third term is bounded above by the largest ratio
\[ \frac{ \frac{\rhog{i}{t}^2}{2 c_t^2} \exp\left( \frac{\rhog{i}{t}^2}{2 c_t} \right)  } { \frac{(\Rg{i}{t+1})^2}{2 c_{t+1}^2}
\exp\left( \frac{(\Rg{i}{t+1})^2}{2 c_{t+1}}\right) } \]
over all $i \in \eplustwo$.
Since $|\rhog{i}{t}| \leq \Rg{i}{t+1} + 1$, each such ratio
is at most
\begin{equation*}
\frac{c_{t+1}^2}{c_t^2} \cdot \left( \frac{\Rg{i}{t+1} + 1}{
\Rg{i}{t+1}} \right)^2
\cdot \exp\left(\frac{1}{2c_t}
+ \frac{\Rg{i}{t+1}}{c_t}\right)
 \cdot \exp\left( \frac{
(\Rg{i}{t+1})^2 (c_{t+1} - c_{t}) }{2 c_t c_{t+1}} \right).
\end{equation*}
We bound each factor in this product
(deferring $c_{t+1}^2/c_t^2$ until later).
\begin{itemize}
\item As $\Rg{i}{t+1} + 1 \geq \sqrt{2 c_t \delta}$ and $c_t \geq c_{t_0} \geq
10/\delta^3$, we have $\sqrt{2 c_t \delta} \geq 1/\delta$ and
\[ \left( \frac{\Rg{i}{t+1}  + 1}{\Rg{i}{t+1}}\right)^2
\leq \frac{1}{(1 - \delta)^2}. \]

\item By Lemma~\ref{lem:scale2regret}, we have
$$ \Rg{i}{t+1}\leq \Rg{i}{t} +1 \leq \sqrt{2c_t(1 + \ln N)} +
1, $$ so
\begin{align*}
\exp\left(\frac1{2c_t} + \frac{\Rg{i}{t+1} }{c_t}\right)
& \leq
\exp\left(\frac{3}{2c_t} + \frac{\Rg{i}{t} }{c_t}\right) \\
& \leq
\exp\left(\frac{3}{2c_t} + \sqrt{\frac{2(1+\ln N)}{c_t}}\right)
\leq e^{3\delta^2/20 + 3\delta/5}
\end{align*}
since $c_t \geq c_{t_0} \geq (16 \ln N)/\delta^2 \geq 10 / \delta^2$.

\item We use Lemma \ref{lem:scale} with $c_{t_0} \geq (16 \ln
N)/\delta^2$, and $\delta \leq 1/2$ to obtain the crude bound
\begin{equation}
c_{t+1} - c_{t} \leq e^{\delta} (4 + 2 \ln N).
\label{eqn:deltabound}
\end{equation}
Now using this bound along with Lemma~\ref{lem:scale2regret}
and $\delta \leq 1/2$ gives
\begin{equation*}
\frac{ (\Rg{i}{t+1})^2 (c_{t+1} - c_{t}) }{2 c_t
c_{t+1}}
\ \leq \ \frac{ e^{\delta} (4 + 2 \ln N)(1 + \ln N)}{ c_t }
\ \leq \ \frac{6.2 + 4.7 \ln N + 3.1 \ln^2 N}{c_{t_0}}
\end{equation*}
which is at most $\delta$ since
$c_{t_0} \geq (16 \ln N) / \delta^2 + (4 \ln^2 N)/\delta$.
\end{itemize}
Therefore, the third term is bounded by
\[ \frac{c_{t+1}^2}{c_t^2} \cdot \frac{\exp(1.6\delta + 0.15\delta^2)}{(1 -
\delta)^2}
\leq
\frac{c_{t+1}^2}{c_t^2} \cdot \frac{e^{2\delta}}{(1 - \delta)^2}. \]

Collecting the three terms in the bound for $c_{t+1} - c_t$,
\[ c_{t+1} - c_{t} \leq \frac{c_{t+1}}{c_t} \cdot
\frac{e^{\delta}}{2} + \frac{c_{t+1}}{c_t} \cdot \frac{\delta e^{\delta}}{e}
+ \frac{c_{t+1}^2}{c_t^2} \cdot \frac{e^{2\delta}}{(1 -
\delta)^2}. \]
We bound the ratio $c_{t+1}/c_t$ as
\begin{equation*}
\frac{c_{t+1}}{c_t}
\ \leq \ 1 + \frac{c_{t+1} - c_t}{c_t} \leq 1 +
\frac{e^{\delta}(4 + 2 \ln N)}{c_t}
\ \leq \ 1 + \frac{4\delta^2}{10} + \frac{\delta^2}{8}
\ = \ 1 + \frac{21\delta e^{\delta}}{40}
\ \leq \ 1 + \delta
\end{equation*}
where we have used the bound in \eqref{eqn:deltabound},
$c_{t_0} \geq (16 \ln N)/\delta^2$, and $\delta \leq 1/2$.
Therefore, we have
\[ c_{t+1} - c_{t} \leq
\frac12 \cdot e^{\delta}(1+\delta)
+ 1 \cdot \frac{e^{2\delta}(1+\delta)^2}{(1-\delta)^2}
+ \frac{\delta e^\delta(1+\delta)}{e} \]
To finish the proof, we use the fact that $\delta \leq \frac{1}{2}$. Using
this condition, and a Taylor series expansion, when $0 \leq \delta \leq
\frac{1}{2}$, $e^{\delta} \leq 1 + \sqrt{e} \delta \leq 1 + 1.65 \delta$.
Using this fact,
\[ \frac{1}{2} e^{\delta}(1 + \delta) + \frac{\delta}{e} e^{\delta}(1 +
\delta)\] is at most 
\[ \frac{1}{2} + 3.02 \delta + 2.63 \delta^2 + 0.61 \delta^3 \]
which in turn is at most $0.5 + 3.49 \delta$.
Moreover, 
\[ \frac{e^{2 \delta}(1 + \delta)^2}{(1 - \delta)^2} \leq e^{2 \delta} (1 +
4 \delta)^2 \]
which again is at most
\[ (1 + 3.3 \delta) (1 + 4 \delta)^2 \]
Using the fact that $\delta \leq \frac{1}{2}$, this is at most $1 + 45.7
\delta$. The lemma follows by combining this with the bound in the previous
paragraph.
\end{proof}

\begin{lemma} \label{lem:taylor}
Let $\psi:\R \to \R$ be any continuous, twice-differentiable, convex
function such that for some $a \in \R$, $\psi$ is non-decreasing on
$[a,\infty)$ and $\psi'(a) = 0$.
Define $f:\R \to \R$ by
\begin{equation*}
f(x)
\ = \
\left\{\begin{array}{ll}
\psi(x) & \text{if} \ x \geq a \\
\psi(a) & \text{if} \ x < a,
\end{array}\right.
\end{equation*}
Then for any $x_0, x \in \R$,
\begin{equation*}
f'(x_0) (x - x_0)
\ \leq \
f(x) - f(x_0)
\ \leq \
f'(x_0) (x - x_0) + \frac{\psi''(\xi)}{2} (x - x_0)^2
\end{equation*}
for some $\min\{x_0,x\} \leq \xi \leq \max\{x_0,x\}$.
\end{lemma}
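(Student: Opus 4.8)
The plan is to first settle the regularity and convexity of $f$, read off the lower bound directly from convexity, and then obtain the upper bound by comparing the second-order remainder of $f$ against the remainder of the genuinely twice-differentiable comparison function $\psi$. Before anything else I would record the basic structure of $\psi$: since $\psi$ is convex with $\psi'(a)=0$, its derivative $\psi'$ is non-decreasing, so $\psi'\le 0$ on $(-\infty,a]$ and $\psi'\ge 0$ on $[a,\infty)$; hence $\psi$ attains its global minimum at $a$, is non-increasing on $(-\infty,a]$, and satisfies $\psi''\ge 0$ everywhere. From this, $f$ is differentiable everywhere with $f'(x)=\psi'(x)$ for $x\ge a$ and $f'(x)=0$ for $x\le a$ (the two expressions agree at $a$ because $\psi'(a)=0$), and $f$ is convex because $f'$ is non-decreasing. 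I would also note the two comparison facts $f(y)\le\psi(y)$ for all $y$, with equality for $y\ge a$, and $f(y)=\psi(a)$ for $y\le a$.

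The lower bound $f(x)-f(x_0)\ge f'(x_0)(x-x_0)$ is then immediate and uniform across all configurations: it is just the first-order (subgradient) inequality for the convex differentiable function $f$. So the whole content of the lemma is in the upper bound.

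For the upper bound, the key quantity is the remainder $R:=f(x)-f(x_0)-f'(x_0)(x-x_0)\ge 0$, which I would bound by the corresponding remainder of $\psi$ and then evaluate via Lagrange. When $x_0\ge a$ we have $f(x_0)=\psi(x_0)$ and $f'(x_0)=\psi'(x_0)$, so $R=\big[\psi(x)-\psi(x_0)-\psi'(x_0)(x-x_0)\big]-\big(\psi(x)-f(x)\big)\le\psi(x)-\psi(x_0)-\psi'(x_0)(x-x_0)$, using $f(x)\le\psi(x)$; the right-hand side is exactly the second-order Taylor remainder of $\psi$, which equals $\tfrac{\psi''(\xi)}{2}(x-x_0)^2$ for some $\xi$ between $x_0$ and $x$ by Taylor's theorem with Lagrange remainder. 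When $x_0<a$ we have $f'(x_0)=0$ and $f(x_0)=\psi(a)$; if also $x<a$ then $R=0\le\tfrac{\psi''(\xi)}{2}(x-x_0)^2$ trivially since $\psi''\ge 0$, while if $x\ge a$ then $R=\psi(x)-\psi(a)$, and applying Taylor's theorem to $\psi$ at the endpoints $a$ and $x$ (using $\psi'(a)=0$) gives $R=\tfrac{\psi''(\xi)}{2}(x-a)^2$ for some $\xi\in[a,x]\subseteq[x_0,x]$; since $|x-a|\le|x-x_0|$ and $\psi''(\xi)\ge 0$, this is at most $\tfrac{\psi''(\xi)}{2}(x-x_0)^2$.

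The main obstacle is exactly the regime where the interval between $x_0$ and $x$ straddles the kink at $a$: there $f$ is not twice-differentiable, and when $x_0<a$ the derivative $f'(x_0)$ no longer equals $\psi'(x_0)$, so one cannot simply quote Taylor's theorem for $f$ itself. The device that resolves this is to route the remainder through the smooth function $\psi$ — using $f\le\psi$ with equality above $a$, that $a$ is the global minimizer of $\psi$, and that $\psi''\ge 0$ — so that the Lagrange remainder is always produced by $\psi$ (on either the original endpoints or the shifted endpoint $a$) rather than by the non-smooth $f$. Accordingly I would organize the write-up by the position of $x_0$ relative to $a$, since that is precisely what controls whether $f'(x_0)=\psi'(x_0)$.
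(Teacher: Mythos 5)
Your proposal is correct and takes essentially the same route as the paper's proof: the lower bound from convexity of $f$, and the upper bound by a case analysis around the kink at $a$, applying Taylor's theorem to the smooth comparison function $\psi$ --- in the crossing case $x_0 < a \le x$, expanding at $a$ and using $\psi'(a) = 0$, $\psi'' \ge 0$, and $(x-a)^2 \le (x-x_0)^2$, exactly as the paper does. Your treatment of the remaining configurations via the comparison $f(x) \le \psi(x)$ (with equality above $a$) simply writes out in full the cases the paper dismisses as ``analogous'' and ``immediate.''
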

\begin{proof}
The lower bound follows from the convexity of $f$, which is inherited from
the convexity of $\psi$.
For the upper bound, we first consider the case $x_0 < a$ and $x \geq a$.
Then, for some $\xi \in [a,x]$,
\begin{eqnarray}
f(x) - f(x_0)
& = & \psi(x) - \psi(a) \nonumber \\
& = & \psi'(a) (x - a) + \frac{\psi''(\xi)}{2} (x - a)^2
\label{eq:taylor-eq1} \\
& \leq & \psi'(a) (x - x_0) + \frac{\psi''(\xi)}{2} (x - x_0)^2
\label{eq:taylor-eq2} \\
& = & f'(x_0) (x - x_0) + \frac{\psi''(\xi)}{2} (x - x_0)^2
\nonumber
\end{eqnarray}
where \eqref{eq:taylor-eq1} follows by Taylor's theorem
and \eqref{eq:taylor-eq2} follows since $x_0 \leq a < x$, $\psi'(a) \geq
0$, and $\psi''(\xi) \geq 0$.
The case $x_0 \geq a$ and $x < a$ is analogous, and the remaining cases are
immediate using Taylor's theorem.
\end{proof}

\begin{figure}[h]
\begin{center}
\framebox[0.95\textwidth]{\begin{minipage}{0.9\textwidth}
\begin{center}
\begin{eqnarray*}
\phi(x,c)
& = & \exp\left(\frac{([x]_+)^2}{2c}\right) \\
\frac{\partial}{\partial x} \phi(x,c)
& = & \frac{[x]_+}{c} \exp\left(\frac{([x]_+)^2}{2c}\right) \\
\frac{\partial^2}{\partial x^2} \phi(x,c)
& = & \left\{
\begin{array}{cc}
\left( \frac{1}{c} + \frac{x^2}{c^2} \right)
\exp\left(\frac{x^2}{2c}\right) & \text{if $x>0$} \\
0 & \text{if $x<0$}
\end{array} \right. \\
\frac{\partial}{\partial c} \phi(x,c)
& = & -\frac{([x]_+)^2}{2c^2} \exp\left(\frac{([x]_+)^2}{2c}\right)
\end{eqnarray*}
\end{center}
\end{minipage}}
\caption{The potential function $\phi(x,c)$ and its derivatives.}
\label{fig:phi}
\end{center}
\end{figure}

\subsubsection*{References}
{\def\section*#1{}\small \bibliography{paper} \bibliographystyle{unsrt}}

\end{document}